\documentclass{article}

\usepackage[preprint]{neurips_2026}

\usepackage[utf8]{inputenc} 
\usepackage[T1]{fontenc}    
\usepackage{hyperref}       
\usepackage{url}            
\usepackage{booktabs}       
\usepackage{amsfonts}       
\usepackage{nicefrac}       
\usepackage{microtype}      
\usepackage{xcolor}         

\usepackage{natbib} 
\usepackage{mathtools} 
\usepackage{booktabs} 
\usepackage{tikz} 

\usepackage{adjustbox}
\usepackage{microtype}
\usepackage{graphicx}
\usepackage{subcaption}
\usepackage{booktabs}
\usepackage{hyperref}
\usepackage{amsmath}
\usepackage{amssymb}
\usepackage{mathtools}
\usepackage{amsthm}
\usepackage{xspace}
\usepackage{enumitem}
\usepackage{algorithm2e}
\usepackage{multirow}

\renewcommand{\d}{{\rm d}}  
\newcommand{\e}{{\bf e}}

\newcommand{\Dcal}{\mathcal{D}}

\newcommand{\Hcal}{{\mathcal{H}}}

\newcommand{\ben}{\begin{enumerate}}
\newcommand{\een}{\end{enumerate}}

\newcommand{\EE}{\mathbb{E}}

\newcommand{\cmt}[1]{}

\theoremstyle{plain}
\newtheorem{theorem}{Theorem}[section]

\newtheorem{lemma}[theorem]{Lemma}

\theoremstyle{definition}

\theoremstyle{remark}

\newcommand{\ours}{SNMPP\xspace}

\title{Structured Neural Marked Point Processes for Interpretable Event Interaction Modeling}

\author{ 
Zhitong Xu$^{1}$, Qiwei Yuan$^{1}$, Yinghao Chen$^{1}$, Shandian Zhe$^{1,2}$, and Bin Shen$^{2}$ \\
$^1$ Kahlert School of Computing, University of Utah \\ 
$^2$ Celonis AI \\}

\begin{document}

\maketitle
\begin{abstract}
Multi-class event streams arise in numerous real-world applications, where uncovering structured, interpretable inter-event relationships, together with accurate prediction, remains a central challenge. Existing neural point process models are highly expressive but encode event interactions in a black-box manner, preventing explicit discovery of structured dependencies.
In this paper, we propose a structured neural marked point process (\ours) that achieves high modeling flexibility while enabling explicit event-wise and class-wise relationship discovery from data. Our model constructs a product-form neural influence kernel composed of a signed interaction network over event types and a delay-aware monotonic temporal network. This design enables explicit characterization of inter-class influence topology --- including excitation, inhibition, and neutrality --- while flexibly capturing diverse temporal decay patterns and potential influence delays. For efficient learning, we develop a stratified Monte Carlo estimator for stochastic training. Extensive experiments on synthetic and real-world benchmark datasets validate the ability of our approach to uncover structured relationships and deliver strong predictive performance.


\end{abstract}
\section{Introduction}
Multi-class event streams arise in many real-world scenarios, including social networks, online retail platforms, supply-chain systems, medical visit records, and enterprise processes. Modeling such event sequences to uncover structured inter-event dependencies and enable accurate prediction is central to understanding the dynamics of event occurrences. Reliable prediction further supports risk monitoring, early warning, and timely intervention.

Many temporal point process models have been proposed for event modeling. Classical Poisson processes assume event independence and therefore ignore interactions between events. Hawkes processes~\citep{hawkes1971spectra} extend this framework by allowing past events to excite future occurrences through parametric triggering kernels, such as the exponential kernel. However, standard Hawkes models are restricted to excitatory effects and do not capture inhibition. Moreover, most formulations assume that an event’s influence is strongest immediately after its occurrence and then decays over time. In many real-world applications, however, event influences may exhibit delayed effects, initially weak and peaking after a certain time lag, as observed in delayed medical treatment effects~\citep{holford2018pharmacodynamic,meng2021impact} or supply-chain replenishment processes~\citep{heydari2009study,chang2019effect}.

Recent neural point processes have emerged as a dominant framework for event modeling. These models directly parameterize the conditional intensity using deep neural network architectures. For example, the Neural Hawkes Process (NHP)~\citep{mei2017neural} and Recurrent Marked Temporal Point Process (RMTPP)~\citep{du2016recurrent} encode event histories via recurrent neural networks and model the intensity as a transformation of the  hidden states. The Transformer Hawkes Process (THP)~\citep{zuo2020transformer} and Self-Attentive Hawkes Process (SAHP)~\citep{zhang2020self} treat events as tokens and employ causal self-attention mechanisms to obtain contextualized embeddings for intensity computation. Although these approaches substantially enhance representational capacity, event relationships are encoded implicitly within latent representations, hindering explicit and interpretable characterization of interaction structure.

To address these limitations, we propose \ours, a structured neural marked point process. Our framework achieves high modeling flexibility while enabling explicit discovery of event-wise and class-wise interaction relationships from data. Our major contributions are as follows:
\begin{itemize}
\item \textbf{Model.} We construct a product-form neural influence kernel composed of a signed interaction network and a delay-aware monotonic temporal network. The signed interaction network takes pairs of event-type embeddings as input and outputs a signed interaction strength, reflecting the influence topology between event types --- including excitation, inhibition, and neutrality. The temporal network is designed to be monotonically decreasing with outputs constrained to the bounded range $[0,1]$, focusing on modeling the temporal decay of influence. To ensure monotonicity, we incorporate input negation, nonnegative network weights, monotonic activation functions, and soft output clipping. The temporal input is defined as the absolute time difference offset by a learnable delay parameter, enabling the model to capture both delayed peak effects and flexible decay patterns over time.
\item \textbf{Algorithm.} For efficient training, we develop a stratified Monte Carlo sampling scheme to provide an unbiased estimator of the likelihood integral terms. Specifically, each inter-event interval is partitioned into multiple segments, from which one sample is drawn per segment to estimate the integral. This stratification reduces variance while preserving sensitivity to local variations in the intensity function. 
\item \textbf{Experiments.}
We first evaluated \ours on two synthetic temporal point process datasets featuring delayed triggering and inhibitory interactions. Our method accurately recovers the conditional intensities, inter-class influence topology, delay parameters, and influence kernel shapes. We next assessed the predictive performance of \ours on five real-world benchmark datasets. Our model consistently improves next-event time prediction while achieving competitive event-type prediction accuracy compared to state-of-the-art neural point process models. Finally, we examined \ours on a simulated supply-chain system governed by physical constraints, decision rules and stochastic dynamics, rather than a predefined point process model. Despite the non–point-process data-generating mechanism, our method successfully uncovers the induced interaction structure, delayed effects, and gating behaviors underlying the simulation, while accurately recovering critical lead-time parameters.
\end{itemize}

\section{Preliminaries: Marked Temporal Point Processes}

We consider a temporal point process with $K$ event types (marks). Let $\mathcal{H}_t = \{(t_n, k_n) : t_n < t\}$
denote the event history up to time $t$, where $t_n$ is the time of the $n$-th event and $k_n \in \{1,\ldots,K\}$ is its mark. The marked conditional intensity function for  type $k$ is defined as
\begin{align}
&\lambda_k(t \mid \mathcal{H}_t) =
\lim_{\Delta t \to 0}
\frac{
\mathbb{P}\big(
\text{an event of type } k \text{ occurs in } [t,t+\Delta t)
\mid \mathcal{H}_t
\big)
}{
\Delta t
}. \notag 
\end{align}

The total conditional intensity is $\lambda(t \mid \mathcal{H}_t)= \sum_{k=1}^K
\lambda_k(t \mid \mathcal{H}_t)$. 
The conditional probability that the next event at time $t$ has mark $k$ is
$\mathbb{P}(k \mid t, \mathcal{H}_t)
=
\frac{
\lambda_k(t \mid \mathcal{H}_t)
}{
\lambda(t \mid \mathcal{H}_t)
}$.

Let $\Gamma = \{(t_n, k_n)\}_{n=1}^N$ be a sequence of marked events observed over $[0,T]$. The log-likelihood of the observed sequence $\Gamma$ under the model is  
\begin{align}
\log p(\Gamma)
=
\sum\nolimits_{n=1}^N
\log \lambda_{k_n}(t_n \mid \mathcal{H}_{t_n})
-
\int_0^T
\lambda(t \mid \mathcal{H}_t)
\, dt. \notag 
\end{align}



\begin{figure*}
	\centering
	\includegraphics[width=\linewidth]{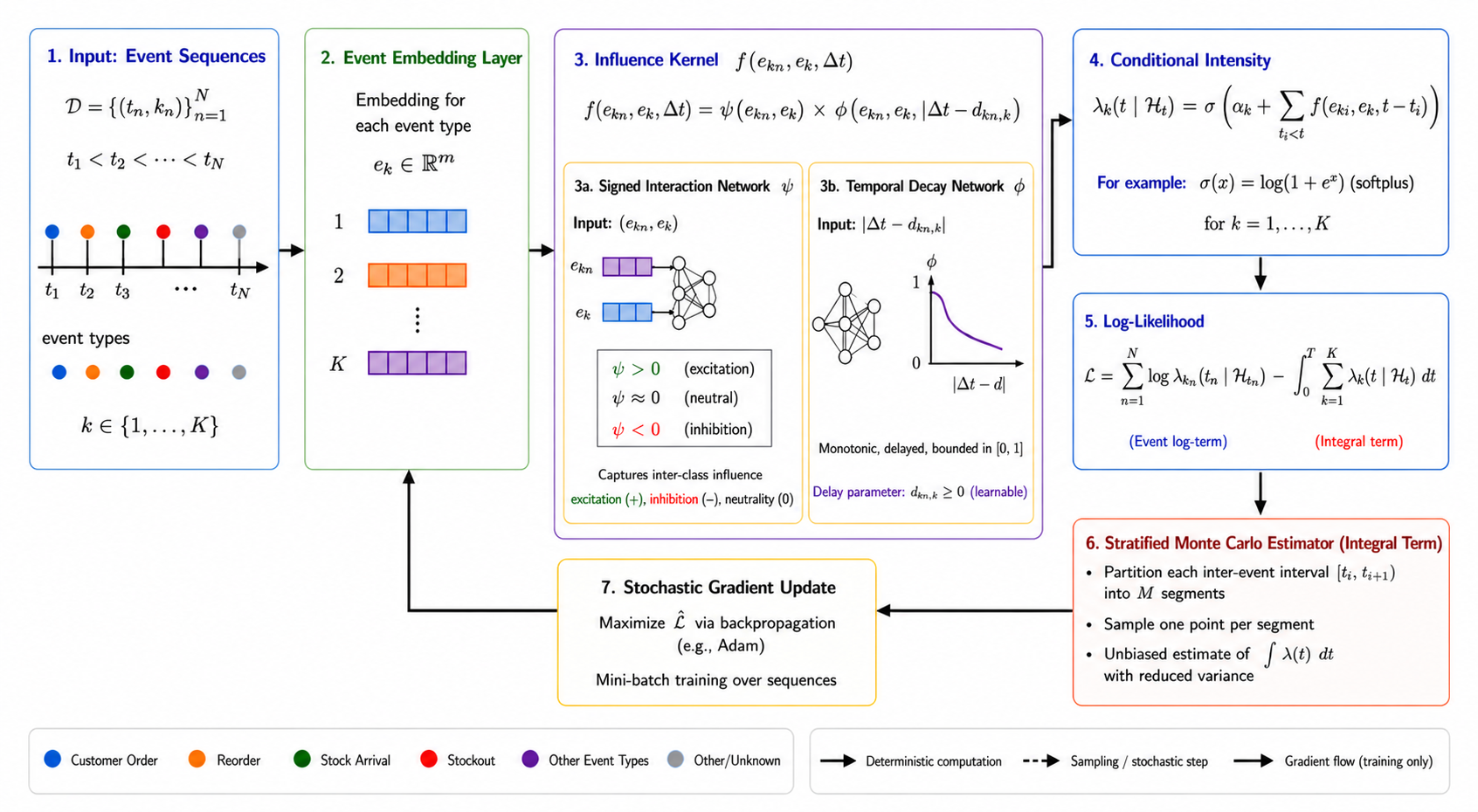}
    \caption{\small Graphical Illustration of \ours.}
	\label{fig:diagram}
\end{figure*}

\section{Methodology}

\subsection{Model}
To enable explicit event-wise and type-wise interaction discovery while retaining flexibility to capture complex temporal dependencies, we model the marked conditional intensity as
\begin{align}
    &\lambda_k(t\mid \Hcal_t)  = \sigma\left( \alpha_k + \sum\nolimits_{t_n<t} f_{k_n \rightarrow k}(t - t_n)\right),  \label{eq:lam-k}
\end{align}
where $\alpha_k$ is a latent baseline parameter capturing the spontaneous tendency of event type $k$, and $f_{k_n \rightarrow k}(\Delta t)$ denotes the influence kernel that characterizes how a past event of type $k_n$ affects the intensity of type $k$ events after a time lag $\Delta t = t - t_n$. A positive value of $f_{k_n \rightarrow k}(\Delta t)$ indicates excitation, a negative value indicates inhibition, and values close to zero correspond to negligible influence. To ensure non-negativity of the conditional intensity, we apply a positive link function $\sigma(\cdot)$ to the superposition of the baseline and interaction terms. 

Next, we introduce a learnable embedding representation $\e_k \in \mathbb{R}^m$ for each event type $k$, and model the influence kernel as a function of the corresponding embeddings and the time lag: $f_{k_n\rightarrow k}(\Delta t) = f(\e_{k_n}, \e_k, \Delta t)$. 
This framework enables a unified modeling of interactions across all event types, requiring only a single function $f$ to capture pairwise influences. In contrast to learning $K^2$ separate kernels for each ordered type pair, our embedding-based framework reduces model complexity and improves scalability, particularly when the number of event types $K$ is large. 

To explicitly capture both type-wise interaction structure and event-wise temporal influence in a flexible and data-driven manner, we further decompose the influence kernel as a product of two neural components:
\begin{align}
    &f(\e_{k_n}, \e_k, \Delta t) = \psi(\e_{k_n}, \e_k) \cdot \phi(\e_{k_n}, \e_k, |\Delta t - d_{k_n, k}|). \label{eq:inf-kernel}
\end{align}
Here, $\psi(\e_{k_n}, \e_k)$ is an interaction network that takes the embeddings of the source and target event types as input and outputs a signed magnitude representing the overall interaction strength and polarity. A positive value indicates excitation (i.e., a triggering effect), a negative value indicates inhibition, and a value near zero corresponds to negligible interaction. The magnitude reflects the maximal influence strength between the two event types. 
This formulation allows $\psi$ to induce an asymmetric interaction topology among event types, capturing directed relationships without requiring separate parameterizations for each ordered pair.

The network $\phi(\e_{k_n}, \e_k, \Delta t)$ in~\eqref{eq:inf-kernel} serves as a temporal kernel that models how interaction strength evolves over time. The embeddings of the source and target event types provide \textit{contextual information}, while $\phi$ focuses on capturing temporal variation. It is natural to assume that the influence of past events attenuates over time. To flexibly model a broad class of decay patterns beyond restrictive parametric forms (e.g., exponential decay), we construct $\phi$ as a \textit{monotonically decreasing} neural network with respect to the time lag.

Specifically, we adopt a feed-forward architecture in which all linear-layer weights are constrained to be non-negative and activation functions are chosen to be monotonically non-decreasing (e.g., Tanh or SoftPlus). By the chain rule, the resulting network is monotonic non-decreasing with respect to its input. To obtain a monotonically \textit{decreasing} function in $\Delta t$, we feed $-\Delta t$ into the first layer, ensuring $\frac{\partial \phi}{\partial \Delta t} < 0$.

To ensure that $\phi$ captures only temporal variation --- leaving the overall interaction magnitude and sign to the $\psi$ network --- we restrict its output to a bounded interval $[a,b]$. This separation avoids scale confounding between the two neural components and improves interpretability of the learned interactions. While hard clipping via $x \mapsto \min(\max(x,a),b)$ enforces this constraint, it blocks gradients when the output falls outside the interval, potentially hindering optimization. Instead, we use a differentiable soft-clipping transformation:
\begin{align}
    \text{clip}_s(x;a,b)=s\log(e^{x/s} + e^{a/s}) - s \log(e^{(x-b)/s}+1), \label{eq:soft-clilp} 
\end{align}
where $s>0$ controls smoothness. As $s \to 0$, this function converges to $\min(\max(x,a),b)$. A detailed derivation of this smooth approximation, including its connection to log-sum-exp relaxations of the max/min operators and its convergence analysis, is provided in Appendix~\ref{sect:soft-clipping}.
It is natural to set $a=0$ and $b=1$, so that the temporal response is constrained to $[0,1]$, allowing $\phi$ to capture normalized temporal variation while leaving the overall magnitude and sign to $\psi$.

To capture delayed effects that commonly arise in practical applications --- such as delayed medical treatment responses, replenishment cycles in supply chains, and policy interventions in crime reduction --- we introduce a delay parameter $d_{k_n,k} > 0$ representing the peak time of influence. Specifically, we feed $|\Delta t - d_{k_n,k}|$ into the $\phi$ network, as shown in~\eqref{eq:inf-kernel}. This formulation allows the influence to be initially weak, increase to its maximum at $d_{k_n,k}$, and  decay thereafter, yielding a unimodal temporal profile centered at $d_{k_n,k}$. The model naturally subsumes the non-delayed case as a special instance: when $d_{k_n,k}$ approaches zero, the influence peaks at the event occurrence time.


\subsection{Algorithm}\label{sect:algo}
Given a collection of observed event sequences $\Dcal$,  training maximizes the joint log likelihood, $\log p(\Dcal) = \sum\nolimits_{\Gamma \in \Dcal} \log p(\Gamma)$.
For each sequence $\Gamma = \{(t_n, k_n)\}_{n=1}^N \in \Dcal$, the log likelihood is 
\begin{align}
    &\log p(\Gamma) =\sum\nolimits_{n=1}^N \log \lambda_{k_n}(t_n|\Hcal_{t_n}) - \sum\nolimits_{k=1}^K \sum\nolimits_{n=0}^N\int_{t_n}^{t_{n+1}}  \lambda_k(t|\Hcal_{t}) \d t, \label{eq:event-ll}
\end{align}
where $t_0 = 0$ and $t_{N+1}=T$. Since no events occur within each inter-event interval $(t_n, t_{n+1})$, the history remains unchanged over that interval. By our definition $\Hcal_t=\{(t_i,k_i):t_i<t\}$, for any $t\in(t_n,t_{n+1})$ we have
$\Hcal_t = \Hcal_{t_{n+1}}$. 
Consequently, the intensity is piecewise-defined and depends only on $\Hcal_{t_{n+1}}$ within each interval.

The integral terms do not admit closed-form solutions. To address this, we propose a stratified Monte Carlo estimator for each interval. Specifically, for each interval $[t_n, t_{n+1}]$, we partition it into $Q$ equal, non-overlapping subintervals $S_1, \ldots, S_Q$ such that $\bigcup_{q=1}^Q S_q = (t_n, t_{n+1})$. We then draw one sample $\hat{t}_q$ uniformly from each segment $S_q$ and approximate the integral as
\begin{align}
    &\int_{t_n}^{t_{n+1}} \lambda_k(t|\Hcal_t) \d t = \int_{t_n}^{t_{n+1}} \lambda_k(t|\Hcal_{t_{n+1}}) \d t \approx \frac{t_{n+1} - t_n}{Q} \cdot \sum\nolimits_{q=1}^Q \lambda_k(\hat{t}_q|\Hcal_{t_{n+1}}). 
\end{align}
Compared to drawing $Q$ i.i.d. samples uniformly from the entire interval, the stratified estimator remains unbiased while typically reducing variance. 
Stratification removes the between-segment variance component present in uniform sampling and retains only within-segment variability, which is typically smaller when the intensity varies across the interval. We provide a formal variance-reduction analysis in Appendix~\ref{app:stratified-mc}, Lemma~\ref{lem:stratified-mc-var}.
By enforcing coverage across subintervals, it prevents sample concentration in small regions and better captures local variations of the intensity function.

We employ mini-batch stochastic training. At each iteration, a mini-batch of event sequences is sampled from $\Dcal$, and stratified Monte Carlo sampling is performed to approximate the integral terms for all inter-event intervals within the batch. Model parameters are optimized using stochastic gradient-based optimization. The overall method is illustrated in Figure~\ref{fig:diagram}.

\cmt{
\subsection{Prediction}\label{sect:pred}
Given a sequence of $N$ observed events $\Hcal = \{(t_1, x_1, y_1), \ldots, (t_N, x_N, y_N)\}$, we aim to predict the time and location of the next event $(t_{N+1}, x_{N+1}, y_{N+1})$.

\paragraph{Predicting the next event time.} The conditional density of the next arrival time is given by the standard point process formulation:
\begin{align}
    p(t_{N+1} \mid \Hcal) = \lambda(t_{N+1}|\Hcal) \exp\left(-\int_{t_N}^{t_{N+1}}\lambda(t|\Hcal) \d t\right), \notag 
\end{align}
where the temporal marginal intensity is 
\begin{align}
\lambda(t|\Hcal) = \int_{a_x}^{b_x}\int_{a_y}^{b_y} \lambda(t, x, y|\Hcal) \d x \d y. \label{eq:lambda_t}
\end{align}
We use the posterior mean as the point prediction of $t_{N+1}$. Let $\tau = t_{N+1} - t_N > 0$. Then 
\begin{align}
    \EE[t_{N+1}|\Hcal] = t_N + \EE[\tau|\Hcal].
\end{align}
Using integration by part, the conditional expectation of $\tau$ can be written as 
\begin{align}
    \EE[\tau|\Hcal] &= \int_0^\infty e^{-\Lambda(\tau)} \d \tau, \label{eq:outer} \\
    \Lambda(\tau) &= \int_0^\tau \lambda(t_N + u) \d u. \label{eq:inner}
\end{align}
The evaluation of $\lambda(t \mid \Hcal)$ and the inner integral~\eqref{eq:inner} is performed using the same tensor-product quadrature method described in Section~\ref{sect:algo}. 

To compute the improper integral in~\eqref{eq:outer}, we apply the transformation
\[
u = \frac{\tau}{1+\tau}, \quad \tau = \frac{u}{1-u},
\]
which maps $\tau \in [0, \infty)$ to $u \in [0, 1)$. This yields
\begin{align}
    \EE[\tau|\Hcal] = \int_0^1 \frac{e^{-\Lambda(u/(1-u)}} {(1-u)^2} \d u. \label{eq:definite}
\end{align}
We then apply Gauss–Legendre quadrature to evaluate~\eqref{eq:definite} efficiently and accurately.

\paragraph{Predicting the event location.} Given the predicted time $t_{N+1}$, the conditional spatial density is 
\[
p(x, y|t_{N+1}) = \frac{\lambda(t_{N+1}, x, y \mid \Hcal)}{\lambda(t_{N+1} \mid \Hcal)}.
\]
We use the conditional expectations as point predictions:
\begin{align}
    \EE[x|t_{N+1}] &= \int_{a_x}^{b_x}\int_{a_y}^{b_y} x\, p(x, y|t_{N+1}) \d x \d y \notag \\
    \EE[y|t_{N+1}] &= \int_{a_x}^{b_x}\int_{a_y}^{b_y} y\, p(x, y|t_{N+1}) \d x \d y. 
\end{align}
These integrals are evaluated using the same tensor-product quadrature scheme described in Section~\ref{sect:algo}.
}


\section{Related Work}
\label{sec:related}
A rich body of work has been developed for temporal point processes. Early models include Poisson processes~\citep{lawless1987regression,grandell2006doubly} and their extensions in matrix and tensor factorization settings~\citep{charlin2015dynamic,gopalan2014content,gopalan2015scalable}. 
Hawkes processes (HPs)~\citep{hawkes1971spectra} subsequently gained significant attention due to their ability to capture mutual excitation among events
~\citep{blundell2012modelling,du2015dirichlet,wang2017predicting,yang2017decoupling,xu2018benefits}.

More recently, neural point processes have emerged as a dominant framework for temporal event modeling. These models use deep neural architectures to parameterize the conditional intensity directly. For example, Recurrent Marked Temporal Point Processes (RMTPP)~\citep{du2016recurrent} employ recurrent neural networks (RNNs) to encode event histories into hidden states, from which the total intensity and mark distribution are derived. Neural Hawkes Processes (NHP)~\citep{mei2017neural} follow a related recurrent formulation but use continuous-time LSTM states~\citep{hochreiter1997long} to model marked conditional intensities. \citet{omi2019fully} proposed modeling the cumulative intensity function with monotonic networks~\citep{sill1997monotonic} conditioned on RNN states, thereby avoiding explicit numerical integration in likelihood computation.
Subsequent works, including Transformer Hawkes Processes (THP)~\citep{zuo2020transformer}, Self-Attentive Hawkes Processes (SAHP)~\citep{zhang2020self}, and Attentive Neural Hawkes Processes (AttNHP)~\citep{yang2022transformer}, treat events and their types as tokens and apply causal attention mechanisms to encode historical dependencies. The conditional intensity is then modeled as a transformation of the resulting contextual representations. More recently, \citet{yuan2025residual} proposed decomposing a point process into the superposition of a classical structured point process, such as a Hawkes process, and a neural point process. This design captures some predefined event structures while assigning the remaining unexplained dynamics to a black-box neural component.

There has also been a growing line of work on intensity-free generative models for temporal event generation, which move beyond the conventional point process likelihood framework~\citep{Shchur2020IntensityFree,lin2022exploring,ludke2023add,ludke2026editbased,kerrigan2026eventflow}. These methods are highly flexible for generation, but they do not directly provide calibrated conditional intensities or explicit event-structure discovery. Finally, \citet{xueeasytpp} introduced an open-source benchmarking framework that implements many state-of-the-art neural point process models, providing a standardized platform for comparison and evaluation. In contrast to these approaches, our method preserves the flexibility of neural point processes while introducing an explicit neural influence kernel for interpretable event-wise and class-wise interaction modeling.

\section{Experiments}
\subsection{Synthetic Data}\label{sect:expr-syn}
We first evaluated \ours on synthetic datasets to validate its ability to capture different influence types and interaction patterns. We constructed two temporal point processes:

    \paragraph{PP1} consists of two event types, denoted as $E_1$ and $E_2$. $E_1$ events occur at a constant rate and exhibit a delayed triggering effect on $E_2$ events: 
    \begin{align}
        &\lambda_{E_1}(t|\Hcal_t) = 0.5, \;\; \lambda_{E_2}(t|\Hcal_t) = 0.05 + \sum\nolimits_{t_n<t, k_n=E_1}  0.6 \cdot \exp\left(-\frac{(t-t_n - 1.0)^2}{2\cdot 0.5^2}\right).
    \end{align}
    \paragraph{PP2} is similar to \textbf{PP1}, except that $E_1$ events exert a delayed inhibitory effect on $E_2$ events:
    \begin{align}
        &\lambda_{E_1}(t|\Hcal_t) = 0.5, \;\; \lambda_{E_2}(t|\Hcal_t) = \sigma\Big[1.0 - \sum\nolimits_{t_n<t, k_n=E_1}  1.5 \cdot \exp\left(-\frac{(t - t_n - 1.0)^2}{2 \cdot 0.5^2}\right) \Big], 
    \end{align}
    where $\sigma(x) = \frac{1}{\beta} \log(1 + \exp(\beta x))$ denotes the softplus transformation used to ensure non-negative intensity, with $\beta = 10$.
We set the time horizon to $T = 50$ for \textbf{PP1} and $T = 40$ for \textbf{PP2}. For each process, we generated 6,000 independent sequences for training and 200 sequences for validation using Ogata’s thinning algorithm~\citep{ogata1981lewis}.

We compared \ours with the following popular and state-of-the-art temporal point process models:
(1) Multivariate Hawkes Process (MHP), which models excitation effects between event types using an exponential triggering kernel;
(2) Neural Hawkes Process (NHP)~\citep{mei2017neural}, which encodes historical events via an LSTM and parameterizes the conditional intensity as a transformation of the hidden state; and
(3) Transformer Hawkes Process (THP)~\citep{zuo2020transformer}, which treats events as tokens, aggregates historical information through causal attention, and models the conditional intensity based on the resulting token representations. Our method is implemented in PyTorch and trained using the AdamW optimizer~\citep{loshchilovdecoupled2019} with a learning rate of $10^{-3}$. The mini-batch size is set to 128. Detailed hyperparameter settings are provided in Appendix~\ref{sect:hyperparameter}. We use the SoftPlus transformation with $\beta=10$ in~\eqref{eq:lam-k} to ensure non-negative conditional intensity. MHP is trained using Adam with the same learning rate. For the remaining baselines, we use the official open-source implementations with their default hyperparameter settings.
\begin{figure*}[!h]
\centering
\begin{subfigure}{.5\textwidth}
  \centering
  \includegraphics[width=\textwidth]{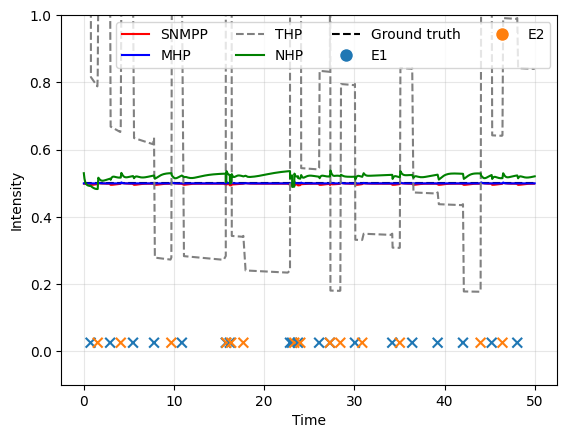}
  \caption{Event type $E_1$}
  \label{fig:intensity-pp1-e0}
\end{subfigure}%
\begin{subfigure}{.5\textwidth}
  \centering
  \includegraphics[width=\textwidth]{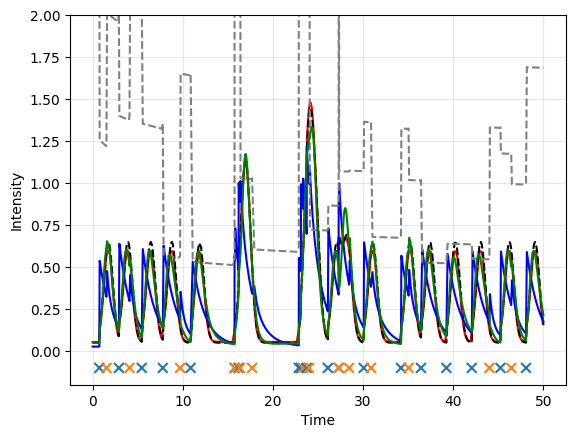}
  \caption{Event type $E_2$}
  \label{fig:intensity-pp1-e1}
\end{subfigure}
\caption{The conditional intensity function on a validation sequence from \textbf{PP1}.}
\label{fig:intensity-pp1}
\end{figure*}

\begin{figure*}[!h]
\centering
\begin{subfigure}{.5\textwidth}
  \centering
  \includegraphics[width=\textwidth]{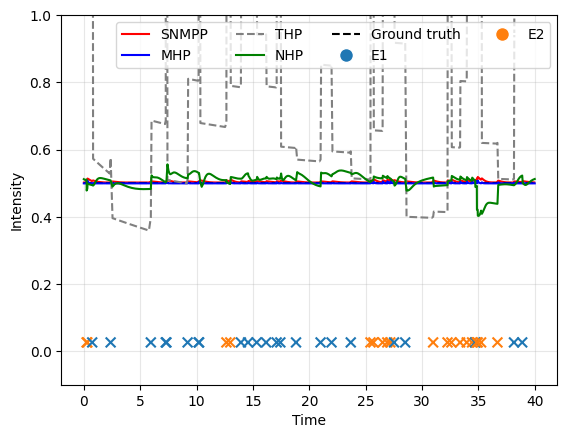}
  \caption{Event type $E_1$}
  \label{fig:intensity-pp2-e0}
\end{subfigure}%
\begin{subfigure}{.5\textwidth}
  \centering
  \includegraphics[width=\textwidth]{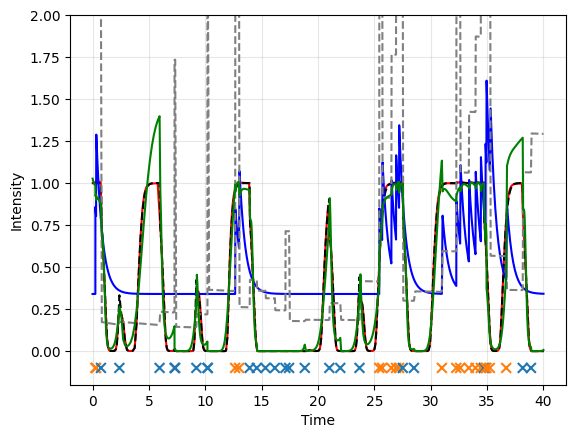}
  \caption{Event type $E_2$}
  \label{fig:intensity-pp2-e1}
\end{subfigure}
\caption{The conditional intensity function on a validation sequence from \textbf{PP2}.}
\label{fig:intensity-pp2}
\end{figure*}
\textbf{Intensity Recovery.}
We first  evaluated whether \ours can accurately recover the conditional intensity functions. For each of \textbf{PP1} and \textbf{PP2}, we randomly select one validation sequence and visualize the learned conditional intensity for each event type.  As shown in Figures~\ref{fig:intensity-pp1} and~\ref{fig:intensity-pp2}, \ours closely matches the ground-truth intensities, while THP exhibit substantial deviations. NHP approximates the intensity reasonably well, demonstrating its strong representational capacity. However, it is less accurate than \ours in capturing the complex temporal patterns of the $E_2$ intensity, particularly under delayed excitation and inhibition settings. 
MHP accurately recovers the $E_1$ intensity --- which is constant and corresponds to a homogeneous Poisson process --- in both \textbf{PP1} and \textbf{PP2}. This behavior is expected, as the Hawkes formulation reduces to a homogeneous Poisson process when excitation terms vanish. However, its estimation of the $E_2$ intensity deteriorates substantially ---  performing worse than NHP --- especially for \textbf{PP2}, where inhibition dominates the process.  This highlights the limitations of standard Hawkes processes in modeling delayed or inhibitory dynamics. Importantly, although neural baselines such as NHP and THP can approximate the intensity function, they do \textit{not} provide explicit or interpretable interaction structures, as event relationships remain embedded within latent representations.

\begin{figure*}[!h]
	\centering
	\includegraphics[width=\textwidth]{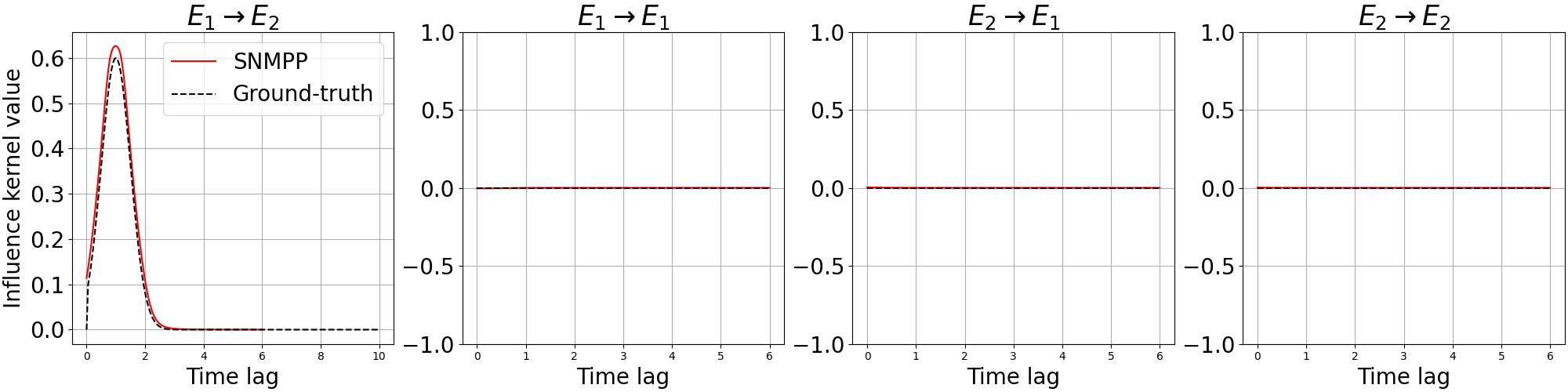}
	\caption{\small Learned influence kernel on \textbf{PP1}. Note that \ours learns a single unified influence kernel shared across all event types, as defined in~\eqref{eq:inf-kernel}.}
	\label{fig:influence-kernel-pp1}
\end{figure*}

\begin{figure*}[!h]
	\centering
	\includegraphics[width=\textwidth]{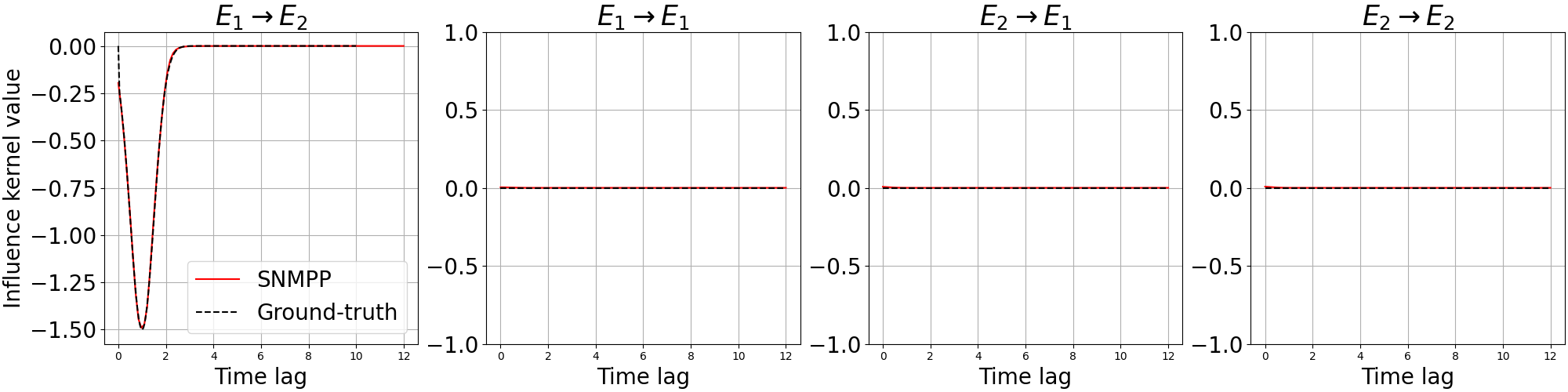}
	\caption{\small Learned influence kernel from \textbf{PP2}.}
	\label{fig:influence-kernel-pp2}
\end{figure*}
\textbf{Structure Discovery.}
We then examined whether \ours can recover the interaction structure among event types and the temporal evolution of influence strengths. To this end, we visualize the learned influence functions for every ordered pair of event types. As shown in Figures~\ref{fig:influence-kernel-pp1} and~\ref{fig:influence-kernel-pp2}, \ours not only correctly identifies the type of interaction --- excitation, inhibition, or neutrality --- but also accurately captures the temporal evolution of the influence strength.

In both \textbf{PP1} and \textbf{PP2}, the only true interaction is from $E_1$ to $E_2$: a delayed excitation in \textbf{PP1} and a delayed inhibition in \textbf{PP2}. The learned influence function $f$ closely matches the ground-truth kernel, including accurate estimation of the delay parameters. The estimated delays are $0.992$ \textit{versus} $1.0$ for \textbf{PP1}, and $0.997$ \textit{versus} $1.0$ for \textbf{PP2}.

For all other type pairs, including $E_1 \rightarrow E_1$, $E_2 \rightarrow E_2$, and $E_2 \rightarrow E_1$, there is no true interaction; the corresponding ground-truth kernels are identically zero. The learned kernels faithfully recover this null structure, with estimated delay parameters close to zero. In addition, \ours accurately estimates the baseline intensities for both processes. Detailed quantitative comparisons are provided in Appendix Table~\ref{tab:pp1-pp2}.


Overall, the results demonstrate that our \textit{unified} neural influence kernel --- despite being \textit{shared} across all event-type pairs --- can flexibly capture heterogeneous interaction patterns, distinguish delayed from non-delayed effects, and accurately recover delay parameters when present.

\begin{table*}[!htp]\centering
\caption{\small Performance of next-event prediction on real-world datasets. Numbers in parenthesis indicates the standard deviation. \textit{Results of \ours and the best result} for each metric are shown in bold.}\label{tab:pred-error}
\small
\centering
\begin{tabular}{lccccc}
\toprule
\multirow{2}{*}{MODEL} & \multicolumn{5}{c}{ Metrics (Time RMSE/Type Error Rate)} \\\cmidrule(lr){2-6}
                                & Amazon & Retweet & Taxi & MIMIC  & StackOverflow \\
\midrule
\multirow{2}{*}{MHP} & 0.635 / 75.9 \% & 22.92 / 55.7 \% & 0.382 / 9.53 \% & 0.839 / 31.7\%  & 1.388 / 65.0 \% \\
                       & (0.005 / 0.005) & (0.212 / 0.004) & (0.002 / 0.0004) & (0.023 / 0.024)  & (0.011 / 0.005) \\
\midrule
\multirow{2}{*}{RMTPP} & 0.620 / 68.1\% & 22.31 / 44.1\% & 0.371 / 9.51\% & 0.950 / 50.3\% & 1.376 / 57.3\% \\
                       & (0.005 / 0.006) & (0.209 / 0.003) & (0.003 / 0.0003) & (0.022 / 0.029) & (0.018 / 0.005) \\
\midrule
\multirow{2}{*}{NHP} & 0.621 / 67.1\% & 21.90 / 40.0\% & 0.369 / \textbf{8.50\%} & 1.021 / 44.6\% & 1.372 / \textbf{55.0\%}\\
                       & (0.005 / 0.006) & (0.184 / 0.002) & (0.003 / 0.0005) & (0.028 / 0.031) & (0.011 / 0.006)\\
\midrule
\multirow{2}{*}{SAHP} & 0.619 / 67.7\%& 22.40 / 41.6\% & 0.372 / 9.75\%  & 1.033 / 44.7\% & 1.375 / 56.1\% \\
                      & (0.005 / 0.006) & (0.301 / 0.002) & (0.003 / 0.0008) & (0.023 / 0.026) & (0.013 / 0.005) \\
\midrule
\multirow{2}{*}{THP} & 0.621 / 66.1\% & 22.01 / 41.5\% & 0.370 / 8.68\%  & 0.967 / 56.3\%& 1.374 / \textbf{55.0\%}\\
                      & (0.003 / 0.007) & (0.188 / 0.003) & (0.003 / 0.0006) & (0.020 / 0.028) & (0.021 / 0.006)\\
\midrule
\multirow{2}{*}{AttNHP} & 0.621 / \textbf{65.3\%} & 22.19 / 40.1\%& 0.371 / 8.71\%  & 0.975 / 33.0\% & 1.372 / 55.2\%\\
                                      & (0.005 / 0.006) & (0.180 / 0.003) & (0.003 / 0.0004) & (0.029 / 0.025) & (0.019 / 0.003) \\
\midrule
\multirow{2}{*}{ODETPP} & 0.620 / 65.8\% & 22.48 / 43.2\%& 0.371 / 10.54\%  & 0.996 / 38.9\% & 1.374 / 56.8\%\\
                                      & (0.006 / 0.008) & (0.175 / 0.004) & (0.003 / 0.0008) & (0.063 / 0.064) & (0.022 / 0.004) \\
\midrule
\multirow{2}{*}{IFTPP} & 0.618 / 67.5\% & 22.18 / \textbf{39.7\%}& 0.377 / {8.56\%}  & 1.046 / 21.6\% & 1.373 / {55.1\%}\\
                                      & (0.005 / 0.007) & (0.204 / 0.003) & (0.003 / 0.006) & (0.078 / 0.029) & (0.010 / 0.005) \\
\midrule
\multirow{2}{*}{SNMPP} & \textbf{0.382 / 66.6\%} & \textbf{18.60 / 40.5\%} & \textbf{0.298 / 9.21\%}  & \textbf{0.815 / 14.9 \%} & \textbf{1.05 / 56.2\%}\\
                       & (0.001 / 0.002) & (0.122 / 0.001) & (0.010 / 0.006) & (0.002 / 0.003) & (0.004 / 0.005)\\
\bottomrule
\end{tabular}
\end{table*}

\subsection{Predictive Performance}
Next, we evaluated the predictive performance of \ours on five real-world benchmark datasets:
\textbf{MIMIC}~\citep{johnson2016mimic},
\textbf{Amazon}~\citep{mcauley2018amazon},
\textbf{Retweet}~\citep{zhou2013learning},
\textbf{Taxi}~\citep{whong2014foiling}, 
and
\textbf{StackOverflow}~\citep{jure2014snap}.
These datasets span diverse application domains, including medical visits, social network interactions, online shopping behavior, taxi pick-up and drop-off events, and online question–answering platforms.  Detailed dataset descriptions and summary statistics are provided in Appendix~\ref{sect:dataset}. For all datasets except {MIMIC}, we adopt the same training, validation, and test splits as provided in EasyTPP~\citep{xueeasytpp}, an open-source benchmark suite offering standardized datasets and model implementations for temporal point processes. Since {MIMIC} is not included in EasyTPP, we use the data splits from widely adopted prior repositories~\citep{mei2017neural,zuo2020transformer}. Following~\citep{xueeasytpp}, each experiment is repeated five times with different random initializations. 
We evaluated next-event time prediction using Root Mean Square Error (RMSE) and next-event type prediction using classification error rate, reporting the mean and standard deviation across runs.

In addition to the methods introduced in Section~\ref{sect:expr-syn}, we include:
(4) Recurrent Marked Temporal Point Process (RMTPP)~\citep{du2016recurrent}, an RNN-based model that explicitly parameterizes the mark distribution;
(5) Self-Attentive Hawkes Process (SAHP)~\citep{zhang2020self}; and
(6) Attentive Neural Hawkes Process (AttNHP)~\citep{yang2022transformer}, two additional attention-based models; (7) ODETPP~\citep{chen2021neuralstpp}, which combines neural ODE dynamics for continuous intensity evolution between events with GRU updates for event-triggered intensity jumps; and 
(8) IFTPP~\citep{Shchur2020IntensityFree}, an intensity-free model that parameterizes inter-arrival time distributions instead of conditional intensities. EasyTPP provides state-of-the-art implementations of the competing models and performs systematic hyperparameter tuning based on validation performance. To ensure a fair comparison, we directly report their optimized results for these datasets. For {MIMIC}, we follow the same hyperparameter validation protocol described in~\citep{xueeasytpp}. For next-event time prediction, we use the posterior expectation under the learned conditional intensity. The expectation is evaluated via truncated numerical integration, with the truncation horizon set to five or ten times the empirical mean inter-event time of the training data. Detailed hyperparameter settings for \ours are provided in Appendix~\ref{sect:hyperparameter}.


As shown in Table~\ref{tab:pred-error}, \ours consistently achieves the lowest time-prediction error across all datasets. While its event-type prediction accuracy is not always the highest, it remains consistently competitive and close to the best-performing methods. Notably, \ours consistently outperforms RMTPP in event-type prediction and achieves accuracy comparable to, or better than, NHP, SAHP, and THP.

Overall, these results show that our neural marked point process, despite explicitly modeling structured interaction topology and temporal influence dynamics, does not compromise predictive performance. On the contrary, \ours achieves state-of-the-art time-prediction accuracy among the compared methods. One possible explanation is that the proposed structured parameterization provides an effective inductive bias, regularizing the learning problem and improving temporal generalization. We provide a more detailed discussion of the expressivity--interpretability trade-off in Appendix~\ref{sect:limitation}.

\subsection{Supply-Chain System Simulation}
\begin{figure*}[!h]
	\centering
	\includegraphics[width=\textwidth]{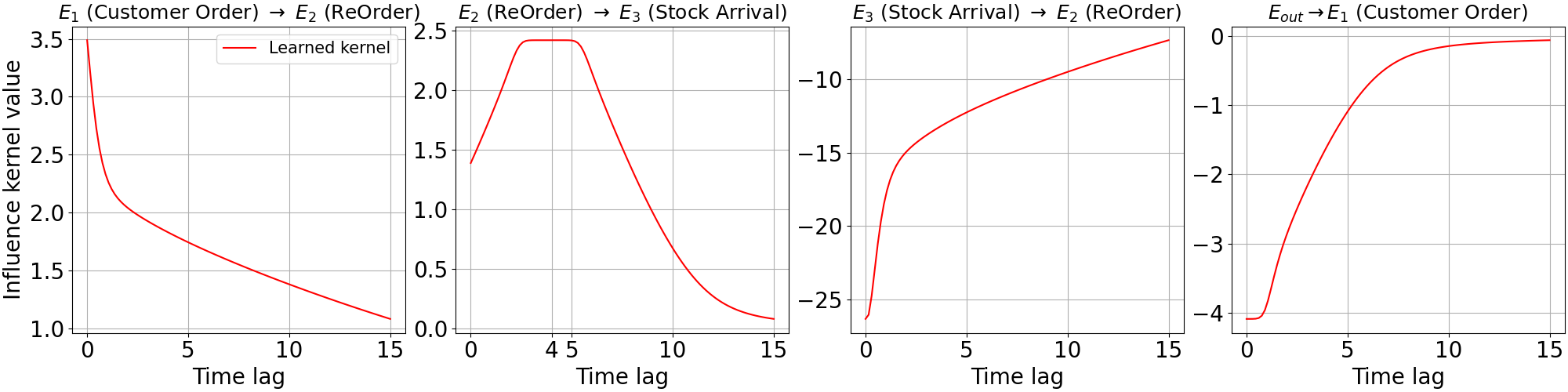}
    \caption{\small Influence kernels learned by \ours\ on event data generated by a simulated supply-chain system.}
	\label{fig:influence-kernel-sc}
\end{figure*}
Third, we evaluated \ours on a simulated supply-chain system designed to reflect realistic operational logic. The event sequences are generated from stochastic dynamics governed by physical constraints and inventory decision rules, rather than being derived from any temporal point process model. This setting enables us to assess the robustness of our approach under misspecification --- a common scenario in real-world applications --- and to examine whether \ours can still recover critical interaction patterns, temporal dynamics, and underlying operational mechanisms.

Specifically, the simulator is implemented as a hidden-state inventory process that captures a closed-loop inventory workflow with four event types: customer orders ($E_1$), replenishment orders ($E_2$), stock arrivals ($E_3$), and a recorded stockout event ($E_{\text{out}}$). Customer orders $E_1$ are generated as candidate arrivals of a homogeneous Poisson process with a rate sampled uniformly from $[1.5,3.5]$ over a time horizon $T_{\max}=30$. Each valid order decreases the (hidden) inventory by one unit. When the inventory first reaches zero, a stockout event $E_{\text{out}}$ is recorded, and thereafter customer orders are physically suppressed (i.e., not recorded) until inventory is replenished. When the inventory falls below the reorder threshold $r=5$ and no restock is pending, a replenishment order $E_2$ is triggered, which schedules a delayed stock arrival $E_3$ after a stochastic lead time sampled as $L=\max(0.5, \mathcal{N}(4.0,1))$. Upon arrival, the inventory increases by a fixed quantity $q=15$, and the stockout state is cleared. The initial inventory is set to $10$. Full simulation details are provided in Appendix Section~\ref{app:supplychain}.
Our model is trained solely on the observed event timestamps and types, without access to the underlying hidden inventory state. We generate 1,500 sequences for training and 250 sequences for validation.  

We first examined whether \ours can recover the interaction structure induced by the simulator (see Appendix~\ref{sect:structure}). We visualize the learned influence kernels for $E_1 \rightarrow E_2$, $E_2 \rightarrow E_3$, $E_3 \rightarrow E_2$, and $E_{\text{out}} \rightarrow E_1$. As shown in Figure~\ref{fig:influence-kernel-sc}, \ours correctly identifies the interaction types in all the cases. Customer orders ($E_1$) exhibit excitation on replenishment orders ($E_2$) due to inventory consumption. Replenishment orders ($E_2$) further excite stock arrival events ($E_3$). In contrast, stock arrival ($E_3$) inhibits replenishment orders ($E_2$) by restoring inventory. Finally, stock-out events ($E_{\text{out}}$) strongly inhibit customer orders ($E_1$), reflecting the physical constraint that no purchase can occur when inventory is depleted.

Notably, \ours accurately detects the \textit{delayed} excitation from $E_2$ to $E_3$. As shown in Figure~\ref{fig:influence-kernel-sc} (second plot), the temporal influence for $E_2 \rightarrow E_3$ peaks near a time lag of 4.0 --- with an estimated delay parameter of 3.92 --- exhibiting a gradual increase followed by decay. This closely matches the simulation mechanism, where lead time is sampled from $L = \max(0.5, \mathcal{N}(4.0,1))$. In practical supply chain systems, such delayed excitation is well understood: \textit{manufacturing and transportation processes introduce inherent latency, so replenishment orders do not immediately translate into stock arrivals}. For the other interaction pairs, \ours captures immediate decaying patterns without delay, consistent with the simulation design. For example, the inhibitory effect of stock-out events ($E_{\text{out}}$) on customer orders ($E_1$) is instantaneous due to physical constraints.

We further visualize the intensity curve for $E_1$ (customer orders) on a validation sequence (Appendix Figure~\ref{fig:intensity-sc}). The learned intensity reflects both the stochastic dynamics and underlying physical rules. Prior to any $E_{\text{out}}$ event, the intensity fluctuates within $[1.5, 3.5]$, consistent with the simulation that generates $E_1$ events via a Poisson process with rate uniformly sampled from this range. Upon the occurrence of an $E_{\text{out}}$ event, the intensity of $E_1$ drops sharply toward zero, reflecting the inability to place orders when inventory is exhausted. Conversely, once a stock arrival event ($E_3$) occurs, the intensity immediately returns to the baseline range, indicating that customer demand resumes upon replenishment.

Overall, these results demonstrate that \ours successfully recovers the interaction topology, delay structure, inhibitory effects, and hard gating behavior embedded in the simulator, highlighting its potential for interpretable relationship discovery in real-world systems.

\section{Conclusion}

We presented \ours, a structured neural marked point process for multi-class event streams. Our model introduces a neural influence kernel that explicitly captures inter-type influence topology and event-wise temporal influence dynamics, while allowing flexible interaction signs, temporal decay patterns, and delayed effects. Across synthetic datasets, simulated dynamical  systems, and real-world benchmarks, \ours demonstrates strong performance in both structured relationship discovery and prediction. Future work will extend the model to asymmetric temporal influence kernels and validate it on broader real-world applications; see Appendix~\ref{sect:limitation} for further discussion.



\bibliography{ref}

@article{holford2018pharmacodynamic,
  title={Pharmacodynamic principles and the time course of delayed and cumulative drug effects},
  author={Holford, Nick},
  journal={Translational and Clinical Pharmacology},
  volume={26},
  number={2},
  pages={56--59},
  year={2018},
  publisher={Korean Society for Clinical Pharmacology and Therapeutics}
}

@book{grandell2006doubly,
  title={Doubly stochastic {P}oisson processes},
  author={Grandell, Jan},
  year={2006},
  publisher={Springer}
}

@inproceedings{
Shchur2020IntensityFree,
title={Intensity-Free Learning of Temporal Point Processes},
author={Oleksandr Shchur and Marin Biloš and Stephan Günnemann},
booktitle={International Conference on Learning Representations},
year={2020},
url={https://openreview.net/forum?id=HygOjhEYDH}
}

@inproceedings{yuan2025residual,
  title={Residual {TPP}: A Unified Lightweight Approach for Event Stream Data Analysis},
  author={Yuan, Ruoxin and Fang, Guanhua},
  booktitle={Forty-second International Conference on Machine Learning},
  year={2025}
}

@article{
lin2022exploring,
title={Exploring Generative Neural Temporal Point Process},
author={Haitao Lin and Lirong Wu and Guojiang Zhao and Liu Pai and Stan Z. Li},
journal={Transactions on Machine Learning Research},
issn={2835-8856},
year={2022},
url={https://openreview.net/forum?id=NPfS5N3jbL},
note={}
}

@article{ludke2023add,
  title={Add and thin: Diffusion for temporal point processes},
  author={L{\"u}dke, David and Bilo{\v{s}}, Marin and Shchur, Oleksandr and Lienen, Marten and G{\"u}nnemann, Stephan},
  journal={Advances in Neural Information Processing Systems},
  volume={36},
  pages={56784--56801},
  year={2023}
}

@inproceedings{
ludke2026editbased,
title={Edit-Based Flow Matching for Temporal Point Processes},
author={David L{\"u}dke and Marten Lienen and Marcel Kollovieh and Stephan G{\"u}nnemann},
booktitle={The Fourteenth International Conference on Learning Representations},
year={2026},
url={https://openreview.net/forum?id=FNf9IV1P2L}
}

@inproceedings{
kerrigan2026eventflow,
title={Event{F}low: Forecasting Temporal Point Processes with Flow Matching},
author={Gavin Kerrigan and Kai Nelson and Padhraic Smyth},
booktitle={The 29th International Conference on Artificial Intelligence and Statistics},
year={2026},
url={https://openreview.net/forum?id=QXqKGOE2JW}
}

@inproceedings{charlin2015dynamic,
  title={Dynamic {P}oisson factorization},
  author={Charlin, Laurent and Ranganath, Rajesh and McInerney, James and Blei, David M},
  booktitle={Proceedings of the 9th ACM Conference on Recommender Systems},
  pages={155--162},
  year={2015}
}

@book{boyd2004convex,
  title={Convex Optimization},
  author={Boyd, Stephen and Vandenberghe, Lieven},
  year={2004},
  publisher={Cambridge University Press}
}

@article{nesterov2005smooth,
  title={Smooth minimization of non-smooth functions},
  author={Nesterov, Yurii},
  journal={Mathematical Programming},
  year={2005},
  volume={103},
  pages={127--152}
}

@inproceedings{gopalan2015scalable,
  title={Scalable recommendation with hierarchical poisson factorization.},
  author={Gopalan, Prem and Hofman, Jake M and Blei, David M},
  booktitle={UAI},
  pages={326--335},
  year={2015}
}

@article{gopalan2014content,
  title={Content-based recommendations with Poisson factorization},
  author={Gopalan, Prem and Charlin, Laurent and Blei, David M},
  journal={Advances in neural information processing systems},
  volume={27},
  year={2014}
}

@article{lawless1987regression,
  title={Regression methods for Poisson process data},
  author={Lawless, Jerald Franklin},
  journal={Journal of the American Statistical Association},
  volume={82},
  number={399},
  pages={808--815},
  year={1987},
  publisher={Taylor \& Francis}
}

@inproceedings{meng2021impact,
  title={The impact of delayed symptomatic treatment implementation in the intensive care unit},
  author={Meng, Lesley and Laudanski, Krzysztof and Restrepo, Mariana and Huffenberger, Ann and Terwiesch, Christian},
  booktitle={Healthcare},
  volume={10},
  number={1},
  pages={35},
  year={2021},
  organization={MDPI}
}

@article{chang2019effect,
  title={The effect of lead-time on supply chain resilience performance},
  author={Chang, Wei-Shiun and Lin, Yu-Ting},
  journal={Asia Pacific Management Review},
  volume={24},
  number={4},
  pages={298--309},
  year={2019},
  publisher={Elsevier}
}

@article{heydari2009study,
  title={A study of lead time variation impact on supply chain performance},
  author={Heydari, Jafar and Baradaran Kazemzadeh, Reza and Chaharsooghi, S Kamal},
  journal={The International Journal of Advanced Manufacturing Technology},
  volume={40},
  number={11},
  pages={1206--1215},
  year={2009},
  publisher={Springer}
}

@article{sill1997monotonic,
  title={Monotonic networks},
  author={Sill, Joseph},
  journal={Advances in neural information processing systems},
  volume={10},
  year={1997}
}

@inproceedings{chen2021neuralstpp,
title={Neural Spatio-Temporal Point Processes},
author={Ricky T. Q. Chen and Brandon Amos and Maximilian Nickel},
booktitle={International Conference on Learning Representations},
year={2021},
}

@inproceedings{loshchilovdecoupled2019,
  title={Decoupled Weight Decay Regularization},
  author={Loshchilov, Ilya and Hutter, Frank},
  booktitle={International Conference on Learning Representations},
  year={2019}
}

@inproceedings{xueeasytpp,
  title={EasyTPP: Towards Open Benchmarking Temporal Point Processes},
  author={Xue, Siqiao and Shi, Xiaoming and Chu, Zhixuan and Wang, Yan and Hao, Hongyan and Zhou, Fan and JIANG, Caigao and Pan, Chen and Zhang, James Y and Wen, Qingsong and others},
  year={2024},
  booktitle={The Twelfth International Conference on Learning Representations}
}

@inproceedings{du2016recurrent,
	title={Recurrent marked temporal point processes: Embedding event history to vector},
	author={Du, Nan and Dai, Hanjun and Trivedi, Rakshit and Upadhyay, Utkarsh and Gomez-Rodriguez, Manuel and Song, Le},
	booktitle={Proceedings of the 22nd ACM SIGKDD International Conference on Knowledge Discovery and Data Mining},
	pages={1555--1564},
	year={2016}
}

@inproceedings{zuo2020transformer,
	title={Transformer hawkes process},
	author={Zuo, Simiao and Jiang, Haoming and Li, Zichong and Zhao, Tuo and Zha, Hongyuan},
	booktitle={International Conference on Machine Learning},
	pages={11692--11702},
	year={2020},
	organization={PMLR}
}

@inproceedings{yang2022transformer,
  title={Transformer Embeddings of Irregularly Spaced Events and Their Participants},
  author={Yang, Chenghao and Mei, Hongyuan and Eisner, Jason},
  booktitle={Proceedings of the Tenth International Conference on Learning Representations (ICLR)},
  year={2022}
}

@inproceedings{omi2019fully,
	title={Fully neural network based model for general temporal point processes},
	author={Omi, Takahiro and Aihara, Kazuyuki and others},
	booktitle={Advances in Neural Information Processing Systems},
	pages={2122--2132},
	year={2019}
}

@inproceedings{zhang2020self,
	title={Self-attentive hawkes process},
	author={Zhang, Qiang and Lipani, Aldo and Kirnap, Omer and Yilmaz, Emine},
	booktitle={International Conference on Machine Learning},
	pages={11183--11193},
	year={2020},
	organization={PMLR}
}

@inproceedings{mei2017neural,
	title={The neural hawkes process: A neurally self-modulating multivariate point process},
	author={Mei, Hongyuan and Eisner, Jason M},
	booktitle={Advances in Neural Information Processing Systems},
	pages={6754--6764},
	year={2017}
}

@article{hochreiter1997long,
	title={Long short-term memory},
	author={Hochreiter, Sepp and Schmidhuber, J{\"u}rgen},
	journal={Neural computation},
	volume={9},
	number={8},
	pages={1735--1780},
	year={1997},
	publisher={MIT Press}
}

@inproceedings{blundell2012modelling,
  title={Modelling reciprocating relationships with Hawkes processes},
  author={Blundell, Charles and Beck, Jeff and Heller, Katherine A},
  booktitle={Advances in Neural Information Processing Systems},
  pages={2600--2608},
  year={2012}
}

@inproceedings{du2015dirichlet,
  title={Dirichlet-hawkes processes with applications to clustering continuous-time document streams},
  author={Du, Nan and Farajtabar, Mehrdad and Ahmed, Amr and Smola, Alexander J and Song, Le},
  booktitle={Proceedings of the 21th ACM SIGKDD International Conference on Knowledge Discovery and Data Mining},
  pages={219--228},
  year={2015},
  organization={ACM}
}

@article{hawkes1971spectra,
  title={Spectra of some self-exciting and mutually exciting point processes},
  author={Hawkes, Alan G},
  journal={Biometrika},
  volume={58},
  number={1},
  pages={83--90},
  year={1971},
  publisher={Oxford University Press}
}

@inproceedings{wang2017predicting,
  title={Predicting user activity level in point processes with mass transport equation},
  author={Wang, Yichen and Ye, Xiaojing and Zha, Hongyuan and Song, Le},
  booktitle={Advances in Neural Information Processing Systems},
  pages={1644--1654},
  year={2017}
}

@inproceedings{yang2017decoupling,
	title={Decoupling Homophily and Reciprocity with Latent Space Network Models.},
	author={Yang, Jiasen and Rao, Vinayak A and Neville, Jennifer},
	booktitle={UAI},
	year={2017}
}

@inproceedings{xu2018benefits,
	title={Benefits from superposed hawkes processes},
	author={Xu, Hongteng and Luo, Dixin and Chen, Xu and Carin, Lawrence},
	booktitle={International Conference on Artificial Intelligence and Statistics},
	pages={623--631},
	year={2018},
	organization={PMLR}
}

@article{ogata1981lewis,
  title={On Lewis' simulation method for point processes},
  author={Ogata, Yosihiko},
  journal={IEEE transactions on information theory},
  volume={27},
  number={1},
  pages={23--31},
  year={1981},
  publisher={IEEE}
}

@article{mcauley2018amazon,
  title={Amazon Review Data (2018)},
  author={McAuley, Julian and Ni, Jianmo},
  journal={Retrieved May},
  volume={11},
  pages={2018},
  year={2018}
}

@article{johnson2016mimic,
  title={{MIMIC-III}, a freely accessible critical care database},
  author={Johnson, Alistair EW and Pollard, Tom J and Shen, Lu and Lehman, Li-wei H and Feng, Mengling and Ghassemi, Mohammad and Moody, Benjamin and Szolovits, Peter and Anthony Celi, Leo and Mark, Roger G},
  journal={Scientific data},
  volume={3},
  number={1},
  pages={1--9},
  year={2016},
  publisher={Nature Publishing Group}
}

@inproceedings{zhou2013learning,
  title={Learning triggering kernels for multi-dimensional hawkes processes},
  author={Zhou, Ke and Zha, Hongyuan and Song, Le},
  booktitle={International conference on machine learning},
  pages={1301--1309},
  year={2013},
  organization={PMLR}
}

@article{whong2014foiling,
  title={FOILing NYC’s taxi trip data},
  author={Whong, Chris},
  journal={FOILing NYCs Taxi Trip Data. Np},
  volume={18},
  pages={14},
  year={2014}
}

@article{jure2014snap,
  title={Snap datasets: {S}tanford large network dataset collection},
  author={Jure, Leskovec},
  journal={Retrieved December 2021 from http://snap. stanford. edu/data},
  year={2014}
}

\onecolumn

\title{Appendix}
\maketitle
\appendix
\section*{Appendix}
\section{Differentiable Soft Clipping}\label{sect:soft-clipping}

To constrain the temporal response within a bounded range while preserving differentiability, we replace the hard projection
\[
\mathrm{clip}(x;a,b) = \min(\max(x,a), b)
\]
with a smooth approximation based on the log-sum-exp relaxation of the $\max$ and $\min$ operators \citep{boyd2004convex,nesterov2005smooth}.

\paragraph{Smooth Maximum and Minimum.}
The smooth approximation of $\max(x,a)$ is given by
\[
\max(x,a) \approx s \log\!\left(e^{x/s} + e^{a/s}\right),
\]
where $s>0$ controls smoothness. As $s \to 0$, this expression converges pointwise to $\max(x,a)$.

Similarly, a smooth approximation of $\min(y,b)$ can be written as
\[
\min(y,b) \approx y - s \log\!\left(e^{(y-b)/s} + 1\right).
\]
As $s \to 0$, this converges pointwise to $\min(y,b)$.

\paragraph{Soft Clipping Construction.}
A fully compositional smooth approximation to $\min(\max(x,a), b)$ would apply the smooth minimum to the smooth maximum:
\[
\tilde y = s \log\!\left(e^{x/s} + e^{a/s}\right),
\]
\[
\mathrm{clip}_s(x;a,b)
= \tilde y - s \log\!\left(e^{(\tilde y - b)/s} + 1\right).
\]

For computational simplicity and improved numerical stability, we adopt the slightly simplified form:
\begin{align}
\mathrm{clip}_s(x;a,b)
= s \log\!\left(e^{x/s} + e^{a/s}\right)
- s \log\!\left(e^{(x-b)/s} + 1\right).
\end{align}

Note that the second term uses $x$ instead of $\tilde y$. We now show that this modification does not affect pointwise convergence to the hard clipping operator.

\paragraph{Convergence Analysis.}
We analyze the limit $s \to 0$ under three cases.
\begin{itemize}
    \item  \textbf{Case 1: $x < a$.}

As $s \to 0$,
\[
s \log(e^{x/s} + e^{a/s}) \to a,
\]
since $e^{a/s}$ dominates.  
Moreover, since $x-b < 0$,
\[
s \log(e^{(x-b)/s}+1) \to 0.
\]
Therefore,
\[
\mathrm{clip}_s(x;a,b) \to a.
\]

\item \textbf{Case 2: $a \le x \le b$.}

As $s \to 0$,
\[
s \log(e^{x/s} + e^{a/s}) \to x,
\]
since $e^{x/s}$ dominates.  
Additionally, since $x-b \le 0$,
\[
s \log(e^{(x-b)/s}+1) \to 0.
\]
Thus,
\[
\mathrm{clip}_s(x;a,b) \to x.
\]

\item \textbf{Case 3: $x > b$.}

As $s \to 0$,
\[
s \log(e^{x/s} + e^{a/s}) \to x.
\]
Since $x-b > 0$,
\[
s \log(e^{(x-b)/s}+1) \to x-b.
\]
Therefore,
\[
\mathrm{clip}_s(x;a,b) \to x - (x-b) = b.
\]
\end{itemize}

\paragraph{Conclusion.}
In all three cases,
\[
\lim_{s\to 0} \mathrm{clip}_s(x;a,b)
= \min(\max(x,a), b).
\]
Hence, although the second smooth term uses $x$ instead of the smooth maximum $\tilde y$, the pointwise convergence to the hard projection remains unchanged. This simplification yields a computationally efficient and numerically stable implementation while preserving differentiability.

In our implementation, we set $a=0$ and $b=1$ to constrain the temporal decay component within $[0,1]$, ensuring bounded influence magnitudes during training. We set the smooth hyperparameter $s=0.1$.

\section{Variance Reduction of the Stratified Monte Carlo Estimator}
\label{app:stratified-mc}

We provide a variance-reduction justification for the stratified Monte Carlo estimator used in Section~\ref{sect:algo}. Consider an inter-event interval $[t_n,t_{n+1}]$ and define its length as
\[
L_n = t_{n+1}-t_n.
\]
Since no event occurs inside this interval, the event history remains fixed within the interval. Therefore, for any fixed model parameters, the integrated intensity term can be written as an integral of a deterministic function over this interval. For example, for a single event type $k$, we may define
\[
g(t) = \lambda_k(t \mid \Hcal_{t_{n+1}}),
\qquad t \in [t_n,t_{n+1}],
\]
or, for the full integrated intensity over all event types, we may define
\[
g(t) = \sum_{k=1}^K \lambda_k(t \mid \Hcal_{t_{n+1}}).
\]
Thus, it suffices to analyze the estimation of
\[
I = \int_{t_n}^{t_{n+1}} g(t)\,dt.
\]

Partition $[t_n,t_{n+1}]$ into $Q$ equal-length, non-overlapping subintervals
$S_1,\ldots,S_Q$, each of length $L_n/Q$. The stratified Monte Carlo estimator draws one sample
$\hat t_q \sim \mathrm{Unif}(S_q)$ from each subinterval and estimates
\[
\widehat I_{\mathrm{strat}}
=
\frac{L_n}{Q}\sum_{q=1}^Q g(\hat t_q).
\]
For comparison, the standard Monte Carlo estimator with the same number of samples draws
$\tilde t_1,\ldots,\tilde t_Q \overset{\mathrm{i.i.d.}}{\sim}
\mathrm{Unif}([t_n,t_{n+1}])$ and estimates
\[
\widehat I_{\mathrm{mc}}
=
\frac{L_n}{Q}\sum_{q=1}^Q g(\tilde t_q).
\]

\begin{lemma}[Variance reduction of stratified Monte Carlo]
\label{lem:stratified-mc-var}
Assume $g(t)$ has finite second moment on $[t_n,t_{n+1}]$. Then both
$\widehat I_{\mathrm{strat}}$ and $\widehat I_{\mathrm{mc}}$ are unbiased estimators of
\[
I = \int_{t_n}^{t_{n+1}} g(t)\,dt.
\]
Moreover,
\[
\mathrm{Var}\!\left[\widehat I_{\mathrm{strat}}\right]
\le
\mathrm{Var}\!\left[\widehat I_{\mathrm{mc}}\right].
\]
The inequality is strict whenever the stratum-wise means of $g$ are not all identical.
\end{lemma}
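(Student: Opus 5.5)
The plan is to reduce everything to the law of total variance applied to a uniform random point on $[t_n,t_{n+1}]$, with the stratum index as the conditioning variable. Let $U\sim\mathrm{Unif}([t_n,t_{n+1}])$ and let $J\in\{1,\ldots,Q\}$ be the index of the stratum containing $U$. Because the strata have equal length, $J$ is uniform on $\{1,\ldots,Q\}$, and conditional on $J=q$, $U$ is uniform on $S_q$. Define the stratum means $\mu_q=\EE[g(\hat t_q)]=\frac{Q}{L_n}\int_{S_q} g\,dt$ and the within-stratum variances $\sigma_q^2=\mathrm{Var}[g(\hat t_q)]$. Set $\mu=\EE[g(U)]=I/L_n$ and $\sigma^2=\mathrm{Var}[g(U)]$. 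All of these are finite by the second-moment assumption.

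\textbf{Unbiasedness.} By linearity, $\EE[\widehat I_{\mathrm{mc}}]=\frac{L_n}{Q}\cdot Q\mu=I$. For the stratified estimator,
\[
\EE[\widehat I_{\mathrm{strat}}]=\frac{L_n}{Q}\sum_q \mu_q=\sum_q\int_{S_q}g\,dt=I,
\]
using that the $S_q$ partition the interval up to a null set.

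\textbf{Variance comparison.} Independence of the samples gives
\[
\mathrm{Var}[\widehat I_{\mathrm{mc}}]=\frac{L_n^2}{Q}\sigma^2,
\qquad
\mathrm{Var}[\widehat I_{\mathrm{strat}}]=\frac{L_n^2}{Q^2}\sum_q\sigma_q^2=\frac{L_n^2}{Q}\cdot\frac1Q\sum_q\sigma_q^2 .
\]
The law of total variance, conditioning on $J$, yields
\[
\sigma^2=\EE[\mathrm{Var}(g(U)\mid J)]+\mathrm{Var}(\EE[g(U)\mid J])=\frac1Q\sum_q\sigma_q^2+\frac1Q\sum_q(\mu_q-\mu)^2 .
\]
Substituting, the difference is
\[
\mathrm{Var}[\widehat I_{\mathrm{mc}}]-\mathrm{Var}[\widehat I_{\mathrm{strat}}]=\frac{L_n^2}{Q^2}\sum_q(\mu_q-\mu)^2\ge 0 .
\]
This term is zero exactly when every $\mu_q$ equals $\mu$, i.e.\ when all stratum means are identical. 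That gives both the inequality and the strictness claim.

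The only step needing care is the equal-length assumption, which is what makes $J$ uniform. Without it, the stratified estimator would need weights $|S_q|$, and the comparison would use proportional allocation; with equal strata that allocation holds automatically, so I expect no real obstacle.
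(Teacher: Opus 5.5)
Your proposal is correct and follows essentially the same route as the paper: unbiasedness by linearity over the strata, then the law of total variance with the uniformly distributed stratum index $J$ as the conditioning variable, which gives the exact gap $\frac{L_n^2}{Q^2}\sum_q(\mu_q-\mu)^2$. Your $\mu=I/L_n$ coincides with the paper's $\bar\mu=\frac1Q\sum_q\mu_q$ because the strata have equal length, which you correctly identify as the assumption that makes $J$ uniform.
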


\begin{proof}
We first show unbiasedness. For the stratified estimator,
\[
\mathbb{E}\!\left[\widehat I_{\mathrm{strat}}\right]
=
\frac{L_n}{Q}\sum_{q=1}^Q
\mathbb{E}_{\hat t_q \sim \mathrm{Unif}(S_q)}[g(\hat t_q)].
\]
Since each $S_q$ has length $L_n/Q$,
\[
\mathbb{E}_{\hat t_q \sim \mathrm{Unif}(S_q)}[g(\hat t_q)]
=
\frac{Q}{L_n}\int_{S_q} g(t)\,dt.
\]
Therefore,
\[
\mathbb{E}\!\left[\widehat I_{\mathrm{strat}}\right]
=
\frac{L_n}{Q}\sum_{q=1}^Q
\frac{Q}{L_n}\int_{S_q} g(t)\,dt
=
\sum_{q=1}^Q \int_{S_q} g(t)\,dt
=
\int_{t_n}^{t_{n+1}} g(t)\,dt
=
I.
\]
Similarly, for the standard Monte Carlo estimator,
\[
\mathbb{E}\!\left[\widehat I_{\mathrm{mc}}\right]
=
\frac{L_n}{Q}\sum_{q=1}^Q
\mathbb{E}_{\tilde t_q \sim \mathrm{Unif}([t_n,t_{n+1}])}[g(\tilde t_q)]
=
L_n \cdot \frac{1}{L_n}
\int_{t_n}^{t_{n+1}} g(t)\,dt
=
I.
\]

We now compare the variances. Define the stratum-wise mean and variance as
\[
\mu_q
=
\mathbb{E}[g(\hat t_q)\mid \hat t_q \in S_q],
\qquad
\sigma_q^2
=
\mathrm{Var}[g(\hat t_q)\mid \hat t_q \in S_q].
\]
Since the samples $\hat t_1,\ldots,\hat t_Q$ are independent across strata,
\[
\mathrm{Var}\!\left[\widehat I_{\mathrm{strat}}\right]
=
\mathrm{Var}\!\left[
\frac{L_n}{Q}\sum_{q=1}^Q g(\hat t_q)
\right]
=
\frac{L_n^2}{Q^2}\sum_{q=1}^Q \sigma_q^2.
\]

For the standard Monte Carlo estimator, let
$\tilde t \sim \mathrm{Unif}([t_n,t_{n+1}])$. Drawing $\tilde t$ uniformly from the whole interval is equivalent to first drawing a stratum index
$J \sim \mathrm{Unif}(\{1,\ldots,Q\})$ and then drawing
$\tilde t \sim \mathrm{Unif}(S_J)$. By the law of total variance,
\[
\mathrm{Var}[g(\tilde t)]
=
\mathbb{E}\!\left[
\mathrm{Var}(g(\tilde t)\mid J)
\right]
+
\mathrm{Var}\!\left(
\mathbb{E}[g(\tilde t)\mid J]
\right).
\]
Using the definitions of $\mu_q$ and $\sigma_q^2$, we obtain
\[
\mathbb{E}\!\left[
\mathrm{Var}(g(\tilde t)\mid J)
\right]
=
\frac{1}{Q}\sum_{q=1}^Q \sigma_q^2,
\]
and
\[
\mathrm{Var}\!\left(
\mathbb{E}[g(\tilde t)\mid J]
\right)
=
\frac{1}{Q}\sum_{q=1}^Q(\mu_q-\bar\mu)^2,
\qquad
\bar\mu = \frac{1}{Q}\sum_{q=1}^Q \mu_q.
\]
Therefore,
\[
\mathrm{Var}[g(\tilde t)]
=
\frac{1}{Q}\sum_{q=1}^Q \sigma_q^2
+
\frac{1}{Q}\sum_{q=1}^Q(\mu_q-\bar\mu)^2.
\]
Since $\widehat I_{\mathrm{mc}}$ averages $Q$ i.i.d. samples from the full interval,
\[
\mathrm{Var}\!\left[\widehat I_{\mathrm{mc}}\right]
=
\frac{L_n^2}{Q}\mathrm{Var}[g(\tilde t)].
\]
Substituting the expression above gives
\[
\mathrm{Var}\!\left[\widehat I_{\mathrm{mc}}\right]
=
\frac{L_n^2}{Q^2}\sum_{q=1}^Q \sigma_q^2
+
\frac{L_n^2}{Q^2}\sum_{q=1}^Q(\mu_q-\bar\mu)^2.
\]
Comparing this with the variance of the stratified estimator yields
\[
\mathrm{Var}\!\left[\widehat I_{\mathrm{mc}}\right]
=
\mathrm{Var}\!\left[\widehat I_{\mathrm{strat}}\right]
+
\frac{L_n^2}{Q^2}\sum_{q=1}^Q(\mu_q-\bar\mu)^2.
\]
The second term is nonnegative, and hence
\[
\mathrm{Var}\!\left[\widehat I_{\mathrm{strat}}\right]
\le
\mathrm{Var}\!\left[\widehat I_{\mathrm{mc}}\right].
\]
The inequality is strict whenever the stratum-wise means $\mu_1,\ldots,\mu_Q$ are not all equal.
\end{proof}

Lemma~\ref{lem:stratified-mc-var} shows that standard Monte Carlo contains both within-stratum variability and between-stratum variability. In contrast, stratified Monte Carlo removes the between-stratum component by forcing one sample to be drawn from each subinterval. This is particularly useful when the conditional intensity varies systematically over an inter-event interval. In such cases, uniform sampling over the entire interval may concentrate samples in limited regions, whereas stratification enforces coverage across the interval and better captures local variation of the intensity function.

The same argument applies to the full likelihood integral by choosing
$g(t)=\sum_{k=1}^K \lambda_k(t\mid \Hcal_{t_{n+1}})$ on each interval. Moreover, under standard differentiability and finite-variance conditions, the same variance-reduction intuition also applies componentwise to stochastic gradient estimates, since gradients of the integral terms can be written as integrals of the corresponding derivative functions.


 \section{Dataset Details}\label{sect:dataset} 
\begin{itemize}
\item \textbf{MIMIC}~\citep{johnson2016mimic}. This dataset contains de-identified ICU clinical visit records spanning seven years. Each visit is treated as an event, and the diagnosis category defines the event type, resulting in 
$K=75$ types. The average sequence length is 3. We use 527 sequences for training, 58 for validation, and 65 for testing.

\item \textbf{Amazon}~\citep{mcauley2018amazon}. This dataset consists of time-stamped user product review events from January 2008 to October 2018. Each event corresponds to a reviewed product category, yielding \(K=16\) event types. We extract 5,200 active users, resulting in 6,454 training, 922 validation, and 1,851 test sequences. The average sequence length is 45 (minimum 14, maximum 94).

\item \textbf{Retweet}~\citep{zhou2013learning}. This dataset contains time-stamped retweet sequences categorized into \(K=3\) types: “small,” “medium,” and “large” users, defined by follower counts. We use 9,000 sequences for training, 1,535 for validation, and 1,520 for testing. The average sequence length is 41, with a maximum of 97.

\item \textbf{Taxi}~\citep{whong2014foiling}. This dataset records time-stamped taxi pick-up and drop-off events across the five boroughs of New York City. Each (borough, pick-up/drop-off) pair defines an event type, yielding \(K=10\) types. We randomly sample 2,000 drivers and split them into 1,400 training, 200 validation, and 400 test sequences.

\item \textbf{StackOverflow}~\citep{jure2014snap}. This dataset contains two years of user badge-award events from a question-answering platform, with \(K=22\) badge types. The train/validation/test split is 4,066 / 451 / 1,126 sequences. The average sequence length is 59.

\end{itemize}
The MIMIC dataset was obtained from~\url{https://drive.google.com/drive/folders/1DJZjYv1eWcmK55xmRk4jVRSB1Alx3vY4}
, as provided in~\citep{mei2017neural,zuo2020transformer}. All other datasets were downloaded from the EasyTPP repository at~\url{https://drive.google.com/drive/u/0/folders/1f8k82-NL6KFKuNMsUwozmbzDSFycYvz7}.

\section{Hyperparameter Settings}\label{sect:hyperparameter}

Throughout the experiments, we set the event-type embedding dimension to 4, the smooth parameter for the soft-clipping transformation $s=0.1$, and the learning rate to $10^{-3}$. For the interaction network $\psi$, we used the GELU activation function. For the monotonic temporal network $\phi$, we adopted the SoftPlus activation (with $\beta = 1$) and enforced non-negativity of the linear weights by applying a SoftPlus transformation to the unconstrained parameters. 
We set the number of segments to $Q=4$ for the stratified Monte Carlo estimator. We evaluated $Q \in \{1,2,4\}$ and observed comparable final performance across all settings; however, $Q=4$ exhibits slightly faster progress during the early training phase and lower variability. See Section~\ref{sect:ablation} for the corresponding ablation study.


For the synthetic datasets \textbf{PP1} and \textbf{PP2}, as well as the simulated supply-chain dataset, both $\psi$ and $\phi$ were fixed to two hidden layers with 16 neurons per layer. For the real-world datasets, we varied the number of hidden layers in $\{1,2,3\}$ and the number of neurons per layer in $\{4,8,16,32\}$. The mini-batch size is set to 16. Empirically, two hidden layers with 16 neurons each consistently provided robust performance across datasets. On the real-world and supply-chain datasets, we used ELU-plus-one as the positive link function in~\eqref{eq:lam-k} to ensure non-negativity of the conditional intensity. 
Real-world experiments were run on a Linux workstation with an NVIDIA H200 GPU, while synthetic experiments were run on a workstation with an NVIDIA RTX 4090 GPU.

\begin{figure}[!h]
	\centering
	\includegraphics[width=0.9\linewidth]{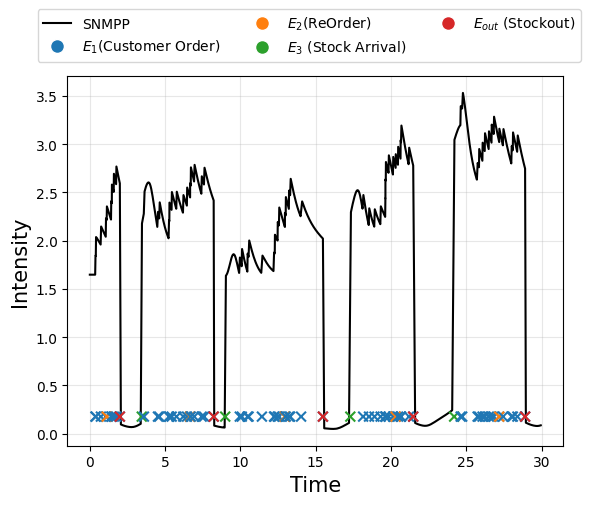}
    \caption{\small Conditional intensity function of $E_1$ (customer order) inferred by \ours\ on a sequence from the simulated supply-chain system.}
	\label{fig:intensity-sc}
\end{figure}

\begin{table}[t]
\centering
\small
\caption{Base and delay parameter estimation on \textbf{PP1} and \textbf{PP2}.}
\label{tab:pp1-pp2}
\begin{subtable}[t]{\linewidth}
\centering
\caption{\textbf{PP1}}
\begin{tabular}{ccccccc}
\toprule
& Base intensity ($E_1$) & Base intensity ($E_2$) 
& $d_{E_1 \rightarrow E_1}$ 
& $d_{E_1 \rightarrow E_2}$ 
& $d_{E_2 \rightarrow E_1}$ 
& $d_{E_2 \rightarrow E_2}$ \\
\midrule
Ground-Truth & 0.5  & 0.05 & 0 & 1.0 & 0 & 0 \\
\ours        & 0.4982 & 0.0544 & 0.0011 & 0.9923 & 0.0045 & 0.0010 \\
\bottomrule
\end{tabular}
\end{subtable}

\vspace{0.5em}

\begin{subtable}[t]{\linewidth}
\centering
\caption{\textbf{PP2}}
\begin{tabular}{ccccccc}
\toprule
& Base intensity ($E_1$) & Base intensity ($E_2$) 
& $d_{E_1 \rightarrow E_1}$ 
& $d_{E_1 \rightarrow E_2}$ 
& $d_{E_2 \rightarrow E_1}$ 
& $d_{E_2 \rightarrow E_2}$ \\
\midrule
Ground-Truth & 0.5  & 1.0 & 0 & 1.0 & 0 & 0 \\
\ours        & 0.5011 & 0.9999 & 0.0034 & 0.9970 & 0.0023 & 0.0002 \\
\bottomrule
\end{tabular}
\end{subtable}
\end{table}
 \section{Supply-Chain System Simulation}\label{app:supplychain}
We simulate event sequences using a hidden-state inventory generator that follows operational constraints and decision rules rather than a temporal point process. The model only observes the resulting event times and types.

\subsection{Event types}
The system produces four event types:
(i) customer order $E_1$,
(ii) replenishment order $E_2$,
(iii) stock arrival $E_3$,
and (iv)  recorded stockout $E_{\text{out}}$, which is emitted when inventory first reaches zero. Their physical meanings are summarized in Table~\ref{tb:supply-chain-events}.
\begin{table}[!ht]
\centering
\small
\caption{Event types in the supply-chain simulation.}\label{tb:supply-chain-events}
\begin{tabular}{lll}
\toprule
\textbf{Symbol} & \textbf{Event Type} & \textbf{Physical Meaning} \\
\midrule
$E_1$ & Customer Order & Inventory consumption \\
$E_2$ & Replenishment Order & Reaction to low inventory \\
$E_3$ & Stock Arrival & Inventory restoration \\
$E_{\text{out}}$ & Stockout Flag & Inventory reaches zero \\
\bottomrule
\end{tabular}
\label{tab:supplychain-events}
\end{table}

\subsection{Hidden state and parameters}
The simulator maintains a hidden inventory level $I(t)$ and a pending restock arrival time $t_{\text{arr}}$ (with at most one pending restock at any time). For each sequence, we initialize $I(0)=I_0$ and specify a reorder threshold $r$, reorder quantity $q$, mean lead time $\mu_L$, and time horizon $T_{\max}$.

The base demand rate is randomized independently for each sequence by sampling
 $\lambda \sim \mathrm{Uniform}(1.5,3.5)$. The remaining parameters are fixed as
\[
T_{\max}=30, \quad I_0=10, \quad r=5, \quad q=15, \quad \mu_L=4.0.
\]

\subsection{Dynamics and physical constraints}
Candidate customer orders are generated by inter-arrival times $\Delta t \sim \mathrm{Exp}(\lambda)$. A candidate order at time $t$ becomes an observed $E_1$ only if $I(t^-)>0$, in which case the inventory decreases by one. When the inventory first reaches zero, the simulator records a stockout event $E_{\text{out}}$ and suppresses subsequent customer orders until the next stock arrival. When $I(t)\le r$ and no restock is pending, a replenishment order $E_2$ is triggered and schedules a delayed arrival $E_3$ at time $t_{\text{arr}} = t + L$, where the lead time is sampled from a truncated normal distribution
$L=\max(0.5,\mathcal{N}(\mu_L,1))$.
At $E_3$, inventory increases by $q$, the pending flag is cleared, and the stockout state is reset.

\subsection{Induced interaction structure}\label{sect:structure}
The generator yields: 
\begin{itemize}
    \item Excitation $E_1\!\rightarrow\!E_2$ via inventory depletion.
    \item Delayed excitation $E_2\!\rightarrow\!E_3$ via lead time.
    \item Inhibition $E_3\!\rightarrow\!E_2$ through replenishment satisfaction (no new reorders while inventory is restored)
    \item Hard gating $E_{\text{out}}\!\rightarrow\!E_1$ by suppressing orders at zero inventory.
\end{itemize}
Algorithm~\ref{alg:supplychain} summarizes the simulation procedure.

\begin{algorithm}[t]
\caption{Supply-chain event simulation. In the experiments, we set $T_{\max}=30$, $I_0=10$, $r=5$, $q=15$, and $\mu_L=4.0$.}
\label{alg:supplychain}
\KwIn{$T_{\max}$, $I_0$, reorder point $r$, reorder quantity $q$, lead time mean $\mu_L$}
Sample demand rate $\lambda \sim \mathrm{Uniform}(1.5,3.5)$\;
$t \leftarrow 0$, $I \leftarrow I_0$, $t_{\mathrm{arr}} \leftarrow \varnothing$, $\texttt{stockout}\leftarrow\texttt{false}$\;
$\Gamma \leftarrow [\,]$\;

\While{$t < T_{\max}$}{
    Sample $\Delta t \sim \mathrm{Exp}(\lambda)$ and set $t_{E_1} \leftarrow t + \Delta t$\;

    \If{$t_{\mathrm{arr}} \neq \varnothing$ \textbf{and} $t_{\mathrm{arr}} < t_{E_1}$}{
        $t \leftarrow t_{\mathrm{arr}}$\;
        Append $(t, E_3)$ to $\Gamma$\;
        $I \leftarrow I + q$, $t_{\mathrm{arr}} \leftarrow \varnothing$, $\texttt{stockout}\leftarrow\texttt{false}$\;
        \textbf{continue}\;
    }

    \If{$t_{E_1} > T_{\max}$}{\textbf{break}}

    $t \leftarrow t_{E_1}$\;

    \If{$I \le 0$}{
        \tcp{Hard gating: suppress customer orders during stockout}
        \textbf{continue}\;
    }

    Append $(t, E_1)$ to $\Gamma$; $I \leftarrow I - 1$\;

    \If{$I = 0$ \textbf{and} $\texttt{stockout}=\texttt{false}$}{
        Append $(t, E_{\mathrm{out}})$ to $\Gamma$; $\texttt{stockout}\leftarrow\texttt{true}$\;
    }

    \If{$I \le r$ \textbf{and} $t_{\mathrm{arr}}=\varnothing$}{
        Append $(t, E_2)$ to $\Gamma$\;
        Sample $L \leftarrow \max(0.5, \mathcal{N}(\mu_L, 1))$\;
        $t_{\mathrm{arr}} \leftarrow t + L$\;
    }
}
\KwOut{Chronologically ordered event sequence $\Gamma$}
\end{algorithm}

\section{Ablation Studies}
\subsection{Stratified Monte Carlo Estimator}\label{sect:ablation}

Since the stratified Monte Carlo estimator is a central algorithmic component of our method, we conducted an ablation study on the number of stratification segments $Q$. Specifically, we used the StackOverflow (SO) dataset, varied $Q \in \{1,2,4\}$, and examined how the next-event time RMSE and event-type accuracy evolve during training. 

In addition, we compared against a simple baseline that, for each sequence, randomly samples a single time point $\hat{t}$ and approximates the likelihood integral over the entire sequence as
\begin{align}
    \int_0^T \lambda(t|\Hcal_t) \d t \approx T \cdot \lambda(\hat{t}|\Hcal_{\hat{t}}). \label{eq:gmce}
\end{align}
We refer to this baseline as the \emph{Global Monte Carlo Estimator} (GMCE). All methods use a learning rate of $10^{-3}$ and a mini-batch size of 16.

\paragraph{Comparison with GMCE.}
We first compared GMCE with our method using $Q=1$. As shown in Figure~\ref{fig:GMCE}, GMCE achieves a small time-prediction RMSE after the first epoch, but the error subsequently increases sharply before gradually decreasing and stabilizing. For event-type prediction, GMCE exhibits rapid initial improvement comparable to our method with $Q=1$; however, after approximately 25 epochs, its performance begins to deteriorate. In contrast, our method demonstrates substantially more stable convergence behavior. This instability of GMCE is likely due to the significantly higher variance of the global estimator in~\eqref{eq:gmce}, which approximates the entire integral using a single sample, compared to our stratified estimator that approximates the integration over each inter-event interval $(t_n, t_{n+1})$ independently.

\paragraph{Effect of the Number of Segments $Q$.}
We next examined the learning curves of our method under different choices of $Q$. As shown in Figure~\ref{fig:ours-Q}, all choices of $Q$ achieve comparable final predictive performance, with nearly identical validation-selected errors. However, larger $Q$ values exhibit smoother learning curves and reduced variability during training. In particular, $Q=4$ shows slightly faster improvement during the first 10 epochs. The learning curve with $Q=1$ displays more frequent and larger fluctuations, while $Q=2$ lies between the two ends. These results suggest that increasing the number of segments in the stratified Monte Carlo estimator reduces variance and improves training stability, while having limited impact on final predictive accuracy.

\begin{figure*}[!h]
\centering
\begin{subfigure}{.5\textwidth}
  \centering
  \includegraphics[width=\textwidth]{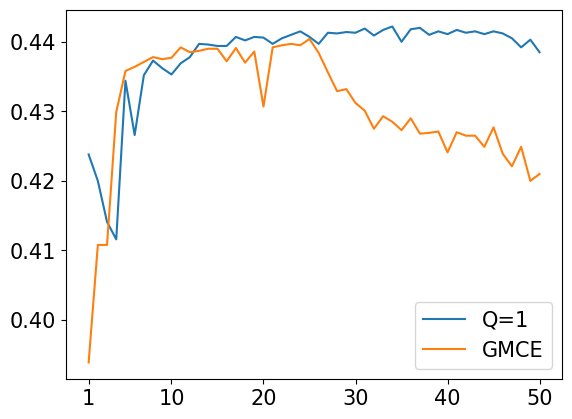}
  \caption{Event type accuracy}
  \label{fig:GMCE-acc}
\end{subfigure}%
\begin{subfigure}{.5\textwidth}
  \centering
  \includegraphics[width=\textwidth]{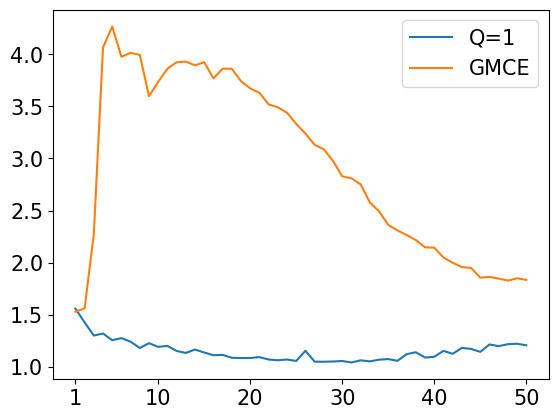}
  \caption{Time RMSE}
  \label{fig:GMCE-rmse}
\end{subfigure}
\caption{\small Next-event time RMSE and event-type prediction accuracy over training epochs for the global Monte Carlo estimator (GMCE) and our stratified Monte Carlo estimator ($Q=1$).}
\label{fig:GMCE}
\end{figure*}

\begin{figure*}[!h]
\centering
\begin{subfigure}{.5\textwidth}
  \centering
  \includegraphics[width=\textwidth]{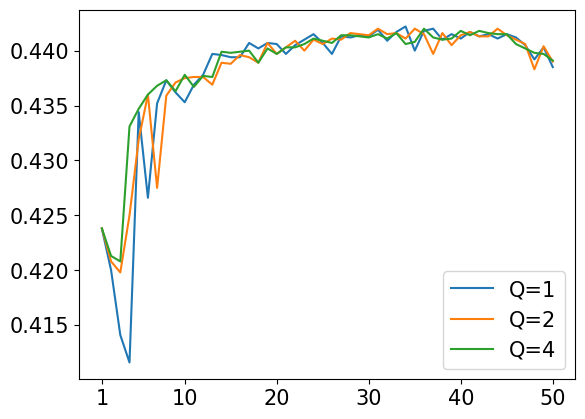}
  \caption{\small Event type accuracy}
  \label{fig:nquad-acc}
\end{subfigure}%
\begin{subfigure}{.5\textwidth}
  \centering
  \includegraphics[width=\textwidth]{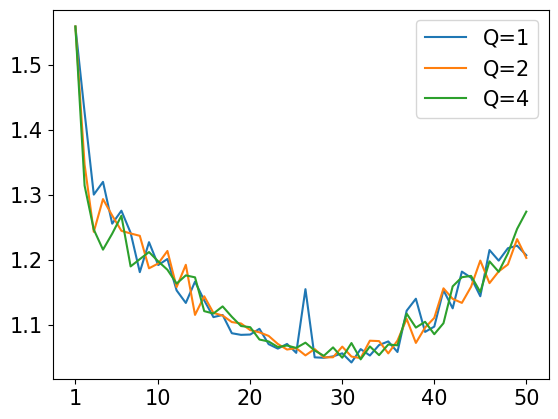}
  \caption{\small Time RMSE}
  \label{fig:nquad-rmse}
\end{subfigure}
\caption{\small Next-event time RMSE and event-type prediction accuracy versus training epochs under different choices of $Q$.}
\label{fig:ours-Q}
\end{figure*}

\subsection{Smoothness Parameter $s$}

Throughout our experiments, we set the smoothness parameter to $s=0.1$ in the soft-clipping transformation~\eqref{eq:soft-clilp}. In this section, we examine the sensitivity of \ours to this choice. We vary $s \in \{0.01, 0.05, 0.1, 0.5, 1.0, 10.0\}$ and evaluate \ours on the StackOverflow (SO) dataset. We track next-event type prediction accuracy and next-event time RMSE over training epochs.

As shown in Figure~\ref{fig:smooth}, all settings generally improve during training in both event-type and event-time prediction. However, the choice of $s$ has a more pronounced effect on event-type prediction accuracy than on event-time RMSE. In Figure~\ref{fig:s-acc}, larger values of $s$ ($s\ge 0.5$) lead to substantial fluctuations in type accuracy after roughly 10 training epochs, with larger $s$ values producing more unstable learning dynamics. In contrast, smaller values $s\in\{0.01,0.05,0.1\}$ yield more stable improvement throughout training. Among these settings, $s=0.1$ is typically near the top of the accuracy curve, suggesting a slightly better stability--performance trade-off than smaller values.

For event-time prediction, the learning curves are less sensitive to $s$, as shown in Figure~\ref{fig:s-rmse}. Nevertheless, very large values such as $s=10.0$ still exhibit noticeable fluctuations after the early training phase. Overall, these results suggest that overly large smoothness values may make the soft-clipping transformation too diffuse, weakening the intended bounded-output behavior and leading to less stable optimization. The choice $s=0.1$ provides a practical balance between smooth differentiability and effective clipping.

\begin{figure*}[!h]
\centering
\begin{subfigure}{.5\textwidth}
  \centering
  \includegraphics[width=\textwidth]{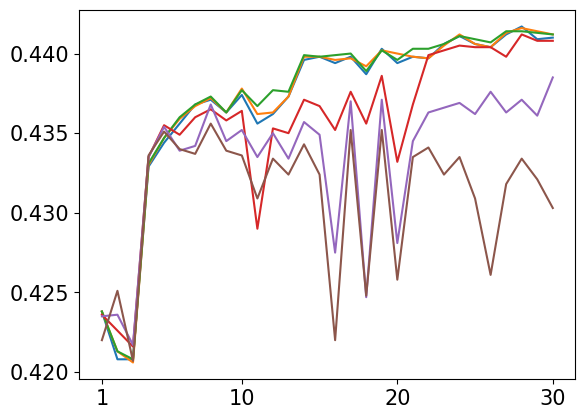}
  \caption{Event type accuracy}
  \label{fig:s-acc}
\end{subfigure}%
\begin{subfigure}{.5\textwidth}
  \centering
  \includegraphics[width=\textwidth]{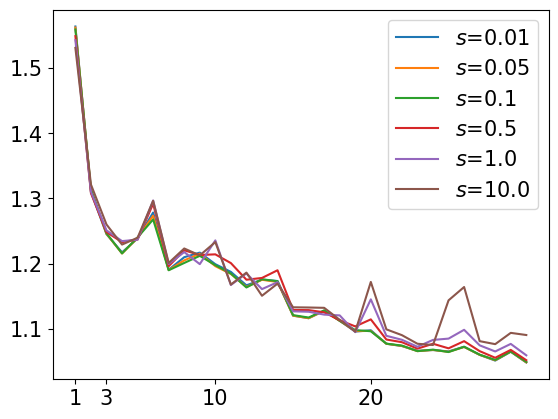}
  \caption{Time RMSE}
  \label{fig:s-rmse}
\end{subfigure}
\caption{\small Next-event time RMSE and event-type prediction accuracy over training epochs with different choices of the smoothness parameter $s$ in the soft-clipping transformation~\eqref{eq:soft-clilp}.}
\label{fig:smooth}
\end{figure*}







\section{Limitations, Discussion, and Future Work}\label{sect:limitation}

\ours is designed to flexibly capture inter-type influence topology, diverse temporal decay patterns of event-level influence strength, and potential delayed peak effects. This structured design provides greater flexibility for interpretable relationship discovery than many classical temporal point process models, such as standard Hawkes processes. Nevertheless, the proposed structural inductive bias does not cover all possible temporal interaction patterns. For example, it may not fully capture multi-modal temporal influence profiles or interactions whose sign changes over time between the same pair of event types.

This limitation reflects a broader trade-off between expressivity and interpretability. Fully flexible black-box neural point process models can represent highly complex temporal dependencies, but their latent representations often obscure the underlying interaction structure. In contrast, overly restrictive structured models may fail to capture important dynamics.  Our simulation studies in Section~\ref{sect:expr-syn}, together with empirical comparisons against black-box neural point process baselines, suggest that an appropriate inductive bias can improve interpretability while maintaining strong predictive performance. In particular, a model that is flexible enough to capture signed interactions with delayed and temporally decaying effects, while still being sufficiently structured to regularize the learning problem, can support more reliable relationship discovery from data.

An important direction for future work is to further expand the expressivity of our structured neural point processes while preserving interpretability. One natural extension is to allow asymmetric temporal influence profiles, where the influence trajectory before and after the estimated delay parameter is modeled by different neural components. Such a design could capture richer influence shapes around the delayed peak. However, it may also introduce additional training challenges, especially when many delay parameters are close to zero; in such cases, the network responsible for modeling the pre-delay influence region may receive limited training signal. A possible solution is to first train a unimodal influence profile, as in the current model, and then fine-tune an additional asymmetric component. We leave this extension for future work.


\newpage
\section*{NeurIPS Paper Checklist}

The checklist is designed to encourage best practices for responsible machine learning research, addressing issues of reproducibility, transparency, research ethics, and societal impact. Do not remove the checklist: {\bf The papers not including the checklist will be desk rejected.} The checklist should follow the references and follow the (optional) supplemental material.  The checklist does NOT count towards the page
limit. 

Please read the checklist guidelines carefully for information on how to answer these questions. For each question in the checklist:
\begin{itemize}
    \item You should answer \answerYes{}, \answerNo{}, or \answerNA{}.
    \item \answerNA{} means either that the question is Not Applicable for that particular paper or the relevant information is Not Available.
    \item Please provide a short (1--2 sentence) justification right after your answer (even for \answerNA). 
\end{itemize}

{\bf The checklist answers are an integral part of your paper submission.} They are visible to the reviewers, area chairs, senior area chairs, and ethics reviewers. You will also be asked to include it (after eventual revisions) with the final version of your paper, and its final version will be published with the paper.

The reviewers of your paper will be asked to use the checklist as one of the factors in their evaluation. While \answerYes{} is generally preferable to \answerNo{}, it is perfectly acceptable to answer \answerNo{} provided a proper justification is given (e.g., error bars are not reported because it would be too computationally expensive'' or ``we were unable to find the license for the dataset we used''). In general, answering \answerNo{} or \answerNA{} is not grounds for rejection. While the questions are phrased in a binary way, we acknowledge that the true answer is often more nuanced, so please just use your best judgment and write a justification to elaborate. All supporting evidence can appear either in the main paper or the supplemental material, provided in appendix. If you answer \answerYes{} to a question, in the justification please point to the section(s) where related material for the question can be found.

IMPORTANT, please:
\begin{itemize}
    \item {\bf Delete this instruction block, but keep the section heading ``NeurIPS Paper Checklist"},
    \item  {\bf Keep the checklist subsection headings, questions/answers and guidelines below.}
    \item {\bf Do not modify the questions and only use the provided macros for your answers}.
\end{itemize}


\begin{enumerate}

\item {\bf Claims}
    \item[] Question: Do the main claims made in the abstract and introduction accurately reflect the paper's contributions and scope?
    \item[] Answer: \answerYes{} 
    \item[] Justification: Our abstract and introduction are align with our paper's contributions and scope.
    \item[] Guidelines:
    \begin{itemize}
        \item The answer \answerNA{} means that the abstract and introduction do not include the claims made in the paper.
        \item The abstract and/or introduction should clearly state the claims made, including the contributions made in the paper and important assumptions and limitations. A \answerNo{} or \answerNA{} answer to this question will not be perceived well by the reviewers. 
        \item The claims made should match theoretical and experimental results, and reflect how much the results can be expected to generalize to other settings. 
        \item It is fine to include aspirational goals as motivation as long as it is clear that these goals are not attained by the paper. 
    \end{itemize}

\item {\bf Limitations}
    \item[] Question: Does the paper discuss the limitations of the work performed by the authors?
    \item[] Answer: \answerYes{} 
    \item[] Justification: We discussed the limitation in Section 6 and Appendix. G.
    \item[] Guidelines:
    \begin{itemize}
        \item The answer \answerNA{} means that the paper has no limitation while the answer \answerNo{} means that the paper has limitations, but those are not discussed in the paper. 
        \item The authors are encouraged to create a separate ``Limitations'' section in their paper.
        \item The paper should point out any strong assumptions and how robust the results are to violations of these assumptions (e.g., independence assumptions, noiseless settings, model well-specification, asymptotic approximations only holding locally). The authors should reflect on how these assumptions might be violated in practice and what the implications would be.
        \item The authors should reflect on the scope of the claims made, e.g., if the approach was only tested on a few datasets or with a few runs. In general, empirical results often depend on implicit assumptions, which should be articulated.
        \item The authors should reflect on the factors that influence the performance of the approach. For example, a facial recognition algorithm may perform poorly when image resolution is low or images are taken in low lighting. Or a speech-to-text system might not be used reliably to provide closed captions for online lectures because it fails to handle technical jargon.
        \item The authors should discuss the computational efficiency of the proposed algorithms and how they scale with dataset size.
        \item If applicable, the authors should discuss possible limitations of their approach to address problems of privacy and fairness.
        \item While the authors might fear that complete honesty about limitations might be used by reviewers as grounds for rejection, a worse outcome might be that reviewers discover limitations that aren't acknowledged in the paper. The authors should use their best judgment and recognize that individual actions in favor of transparency play an important role in developing norms that preserve the integrity of the community. Reviewers will be specifically instructed to not penalize honesty concerning limitations.
    \end{itemize}

\item {\bf Theory assumptions and proofs}
    \item[] Question: For each theoretical result, does the paper provide the full set of assumptions and a complete (and correct) proof?
    \item[] Answer: \answerYes{} 
    \item[] Justification: Our theory and proof in Appendix. A and B provide the full set of assumptions and a complete and correct proof.
    \item[] Guidelines:
    \begin{itemize}
        \item The answer \answerNA{} means that the paper does not include theoretical results. 
        \item All the theorems, formulas, and proofs in the paper should be numbered and cross-referenced.
        \item All assumptions should be clearly stated or referenced in the statement of any theorems.
        \item The proofs can either appear in the main paper or the supplemental material, but if they appear in the supplemental material, the authors are encouraged to provide a short proof sketch to provide intuition. 
        \item Inversely, any informal proof provided in the core of the paper should be complemented by formal proofs provided in appendix or supplemental material.
        \item Theorems and Lemmas that the proof relies upon should be properly referenced. 
    \end{itemize}

    \item {\bf Experimental result reproducibility}
    \item[] Question: Does the paper fully disclose all the information needed to reproduce the main experimental results of the paper to the extent that it affects the main claims and/or conclusions of the paper (regardless of whether the code and data are provided or not)?
    \item[] Answer: \answerYes{} 
    \item[] Justification: Our experiment results are reproducible with our code.
    \item[] Guidelines:
    \begin{itemize}
        \item The answer \answerNA{} means that the paper does not include experiments.
        \item If the paper includes experiments, a \answerNo{} answer to this question will not be perceived well by the reviewers: Making the paper reproducible is important, regardless of whether the code and data are provided or not.
        \item If the contribution is a dataset and\slash or model, the authors should describe the steps taken to make their results reproducible or verifiable. 
        \item Depending on the contribution, reproducibility can be accomplished in various ways. For example, if the contribution is a novel architecture, describing the architecture fully might suffice, or if the contribution is a specific model and empirical evaluation, it may be necessary to either make it possible for others to replicate the model with the same dataset, or provide access to the model. In general. releasing code and data is often one good way to accomplish this, but reproducibility can also be provided via detailed instructions for how to replicate the results, access to a hosted model (e.g., in the case of a large language model), releasing of a model checkpoint, or other means that are appropriate to the research performed.
        \item While NeurIPS does not require releasing code, the conference does require all submissions to provide some reasonable avenue for reproducibility, which may depend on the nature of the contribution. For example
        \begin{enumerate}
            \item If the contribution is primarily a new algorithm, the paper should make it clear how to reproduce that algorithm.
            \item If the contribution is primarily a new model architecture, the paper should describe the architecture clearly and fully.
            \item If the contribution is a new model (e.g., a large language model), then there should either be a way to access this model for reproducing the results or a way to reproduce the model (e.g., with an open-source dataset or instructions for how to construct the dataset).
            \item We recognize that reproducibility may be tricky in some cases, in which case authors are welcome to describe the particular way they provide for reproducibility. In the case of closed-source models, it may be that access to the model is limited in some way (e.g., to registered users), but it should be possible for other researchers to have some path to reproducing or verifying the results.
        \end{enumerate}
    \end{itemize}

\item {\bf Open access to data and code}
    \item[] Question: Does the paper provide open access to the data and code, with sufficient instructions to faithfully reproduce the main experimental results, as described in supplemental material?
    \item[] Answer: \answerNA{} 
    \item[] Justification: We will release our code repository once our work is published.
    \item[] Guidelines:
    \begin{itemize}
        \item The answer \answerNA{} means that paper does not include experiments requiring code.
        \item Please see the NeurIPS code and data submission guidelines (\url{https://neurips.cc/public/guides/CodeSubmissionPolicy}) for more details.
        \item While we encourage the release of code and data, we understand that this might not be possible, so \answerNo{} is an acceptable answer. Papers cannot be rejected simply for not including code, unless this is central to the contribution (e.g., for a new open-source benchmark).
        \item The instructions should contain the exact command and environment needed to run to reproduce the results. See the NeurIPS code and data submission guidelines (\url{https://neurips.cc/public/guides/CodeSubmissionPolicy}) for more details.
        \item The authors should provide instructions on data access and preparation, including how to access the raw data, preprocessed data, intermediate data, and generated data, etc.
        \item The authors should provide scripts to reproduce all experimental results for the new proposed method and baselines. If only a subset of experiments are reproducible, they should state which ones are omitted from the script and why.
        \item At submission time, to preserve anonymity, the authors should release anonymized versions (if applicable).
        \item Providing as much information as possible in supplemental material (appended to the paper) is recommended, but including URLs to data and code is permitted.
    \end{itemize}

\item {\bf Experimental setting/details}
    \item[] Question: Does the paper specify all the training and test details (e.g., data splits, hyperparameters, how they were chosen, type of optimizer) necessary to understand the results?
    \item[] Answer: \answerYes{} 
    \item[] Justification:The experiment settings and details are discussed in Appendix. D.
    \item[] Guidelines:
    \begin{itemize}
        \item The answer \answerNA{} means that the paper does not include experiments.
        \item The experimental setting should be presented in the core of the paper to a level of detail that is necessary to appreciate the results and make sense of them.
        \item The full details can be provided either with the code, in appendix, or as supplemental material.
    \end{itemize}

\item {\bf Experiment statistical significance}
    \item[] Question: Does the paper report error bars suitably and correctly defined or other appropriate information about the statistical significance of the experiments?
    \item[] Answer: \answerYes{} 
    \item[] Justification: Our main results included standard deviation.
    \item[] Guidelines:
    \begin{itemize}
        \item The answer \answerNA{} means that the paper does not include experiments.
        \item The authors should answer \answerYes{} if the results are accompanied by error bars, confidence intervals, or statistical significance tests, at least for the experiments that support the main claims of the paper.
        \item The factors of variability that the error bars are capturing should be clearly stated (for example, train/test split, initialization, random drawing of some parameter, or overall run with given experimental conditions).
        \item The method for calculating the error bars should be explained (closed form formula, call to a library function, bootstrap, etc.)
        \item The assumptions made should be given (e.g., Normally distributed errors).
        \item It should be clear whether the error bar is the standard deviation or the standard error of the mean.
        \item It is OK to report 1-sigma error bars, but one should state it. The authors should preferably report a 2-sigma error bar than state that they have a 96\% CI, if the hypothesis of Normality of errors is not verified.
        \item For asymmetric distributions, the authors should be careful not to show in tables or figures symmetric error bars that would yield results that are out of range (e.g., negative error rates).
        \item If error bars are reported in tables or plots, the authors should explain in the text how they were calculated and reference the corresponding figures or tables in the text.
    \end{itemize}

\item {\bf Experiments compute resources}
    \item[] Question: For each experiment, does the paper provide sufficient information on the computer resources (type of compute workers, memory, time of execution) needed to reproduce the experiments?
    \item[] Answer: \answerYes{} 
    \item[] Justification: We discussed the hardware spec in Appendix. D.
    \item[] Guidelines:
    \begin{itemize}
        \item The answer \answerNA{} means that the paper does not include experiments.
        \item The paper should indicate the type of compute workers CPU or GPU, internal cluster, or cloud provider, including relevant memory and storage.
        \item The paper should provide the amount of compute required for each of the individual experimental runs as well as estimate the total compute. 
        \item The paper should disclose whether the full research project required more compute than the experiments reported in the paper (e.g., preliminary or failed experiments that didn't make it into the paper). 
    \end{itemize}
    
\item {\bf Code of ethics}
    \item[] Question: Does the research conducted in the paper conform, in every respect, with the NeurIPS Code of Ethics \url{https://neurips.cc/public/EthicsGuidelines}?
    \item[] Answer: \answerYes{} 
    \item[] Justification: Our work follows NeurIPS Code of Ethics.
    \item[] Guidelines:
    \begin{itemize}
        \item The answer \answerNA{} means that the authors have not reviewed the NeurIPS Code of Ethics.
        \item If the authors answer \answerNo, they should explain the special circumstances that require a deviation from the Code of Ethics.
        \item The authors should make sure to preserve anonymity (e.g., if there is a special consideration due to laws or regulations in their jurisdiction).
    \end{itemize}

\item {\bf Broader impacts}
    \item[] Question: Does the paper discuss both potential positive societal impacts and negative societal impacts of the work performed?
    \item[] Answer: \answerNA{} 
    \item[] Justification: Our work does not have any societal impacts.
    \item[] Guidelines:
    \begin{itemize}
        \item The answer \answerNA{} means that there is no societal impact of the work performed.
        \item If the authors answer \answerNA{} or \answerNo, they should explain why their work has no societal impact or why the paper does not address societal impact.
        \item Examples of negative societal impacts include potential malicious or unintended uses (e.g., disinformation, generating fake profiles, surveillance), fairness considerations (e.g., deployment of technologies that could make decisions that unfairly impact specific groups), privacy considerations, and security considerations.
        \item The conference expects that many papers will be foundational research and not tied to particular applications, let alone deployments. However, if there is a direct path to any negative applications, the authors should point it out. For example, it is legitimate to point out that an improvement in the quality of generative models could be used to generate Deepfakes for disinformation. On the other hand, it is not needed to point out that a generic algorithm for optimizing neural networks could enable people to train models that generate Deepfakes faster.
        \item The authors should consider possible harms that could arise when the technology is being used as intended and functioning correctly, harms that could arise when the technology is being used as intended but gives incorrect results, and harms following from (intentional or unintentional) misuse of the technology.
        \item If there are negative societal impacts, the authors could also discuss possible mitigation strategies (e.g., gated release of models, providing defenses in addition to attacks, mechanisms for monitoring misuse, mechanisms to monitor how a system learns from feedback over time, improving the efficiency and accessibility of ML).
    \end{itemize}
    
\item {\bf Safeguards}
    \item[] Question: Does the paper describe safeguards that have been put in place for responsible release of data or models that have a high risk for misuse (e.g., pre-trained language models, image generators, or scraped datasets)?
    \item[] Answer: \answerNA{} 
    \item[] Justification: Our work poses no such risks.
    \item[] Guidelines:
    \begin{itemize}
        \item The answer \answerNA{} means that the paper poses no such risks.
        \item Released models that have a high risk for misuse or dual-use should be released with necessary safeguards to allow for controlled use of the model, for example by requiring that users adhere to usage guidelines or restrictions to access the model or implementing safety filters. 
        \item Datasets that have been scraped from the Internet could pose safety risks. The authors should describe how they avoided releasing unsafe images.
        \item We recognize that providing effective safeguards is challenging, and many papers do not require this, but we encourage authors to take this into account and make a best faith effort.
    \end{itemize}

\item {\bf Licenses for existing assets}
    \item[] Question: Are the creators or original owners of assets (e.g., code, data, models), used in the paper, properly credited and are the license and terms of use explicitly mentioned and properly respected?
    \item[] Answer: \answerNA{} 
    \item[] Justification: Our work does not use existing assets.
    \item[] Guidelines:
    \begin{itemize}
        \item The answer \answerNA{} means that the paper does not use existing assets.
        \item The authors should cite the original paper that produced the code package or dataset.
        \item The authors should state which version of the asset is used and, if possible, include a URL.
        \item The name of the license (e.g., CC-BY 4.0) should be included for each asset.
        \item For scraped data from a particular source (e.g., website), the copyright and terms of service of that source should be provided.
        \item If assets are released, the license, copyright information, and terms of use in the package should be provided. For popular datasets, \url{paperswithcode.com/datasets} has curated licenses for some datasets. Their licensing guide can help determine the license of a dataset.
        \item For existing datasets that are re-packaged, both the original license and the license of the derived asset (if it has changed) should be provided.
        \item If this information is not available online, the authors are encouraged to reach out to the asset's creators.
    \end{itemize}

\item {\bf New assets}
    \item[] Question: Are new assets introduced in the paper well documented and is the documentation provided alongside the assets?
    \item[] Answer: \answerNA{} 
    \item[] Justification: Our work does not release new assets.
    \item[] Guidelines:
    \begin{itemize}
        \item The answer \answerNA{} means that the paper does not release new assets.
        \item Researchers should communicate the details of the dataset\slash code\slash model as part of their submissions via structured templates. This includes details about training, license, limitations, etc. 
        \item The paper should discuss whether and how consent was obtained from people whose asset is used.
        \item At submission time, remember to anonymize your assets (if applicable). You can either create an anonymized URL or include an anonymized zip file.
    \end{itemize}

\item {\bf Crowdsourcing and research with human subjects}
    \item[] Question: For crowdsourcing experiments and research with human subjects, does the paper include the full text of instructions given to participants and screenshots, if applicable, as well as details about compensation (if any)? 
    \item[] Answer: \answerNA{} 
    \item[] Justification: Our work does not involve crowdsourcing nor research with human subjects.
    \item[] Guidelines:
    \begin{itemize}
        \item The answer \answerNA{} means that the paper does not involve crowdsourcing nor research with human subjects.
        \item Including this information in the supplemental material is fine, but if the main contribution of the paper involves human subjects, then as much detail as possible should be included in the main paper. 
        \item According to the NeurIPS Code of Ethics, workers involved in data collection, curation, or other labor should be paid at least the minimum wage in the country of the data collector. 
    \end{itemize}

\item {\bf Institutional review board (IRB) approvals or equivalent for research with human subjects}
    \item[] Question: Does the paper describe potential risks incurred by study participants, whether such risks were disclosed to the subjects, and whether Institutional Review Board (IRB) approvals (or an equivalent approval/review based on the requirements of your country or institution) were obtained?
    \item[] Answer: \answerNA{} 
    \item[] Justification: Our work does not involve crowdsourcing nor research with human subjects.
    \item[] Guidelines:
    \begin{itemize}
        \item The answer \answerNA{} means that the paper does not involve crowdsourcing nor research with human subjects.
        \item Depending on the country in which research is conducted, IRB approval (or equivalent) may be required for any human subjects research. If you obtained IRB approval, you should clearly state this in the paper. 
        \item We recognize that the procedures for this may vary significantly between institutions and locations, and we expect authors to adhere to the NeurIPS Code of Ethics and the guidelines for their institution. 
        \item For initial submissions, do not include any information that would break anonymity (if applicable), such as the institution conducting the review.
    \end{itemize}

\item {\bf Declaration of LLM usage}
    \item[] Question: Does the paper describe the usage of LLMs if it is an important, original, or non-standard component of the core methods in this research? Note that if the LLM is used only for writing, editing, or formatting purposes and does \emph{not} impact the core methodology, scientific rigor, or originality of the research, declaration is not required.
    \item[] Answer: \answerNA{} 
    \item[] Justification: Our paper does not involve usage of LLM.
    \item[] Guidelines:
    \begin{itemize}
        \item The answer \answerNA{} means that the core method development in this research does not involve LLMs as any important, original, or non-standard components.
        \item Please refer to our LLM policy in the NeurIPS handbook for what should or should not be described.
    \end{itemize}

\end{enumerate}

\end{document}